\definecolor{mypink1}{RGB}{231, 136, 149}
\definecolor{green1}{RGB}{158, 200, 185}
\newtheorem{theorem}{Theorem}[section]
\newtheorem{corollary}[theorem]{Corollary}
\newtheorem{proposition}[theorem]{Proposition}
\theoremstyle{definition}
\newtheorem{definition}[theorem]{Definition}
\newtheorem{remark}[theorem]{Remark}
\title[Neural Shr\"odinger Bridge Matching for Pansharpening]
{Neural Shr\"odinger Bridge Matching for Pansharpening} 
\author[Zihan Cao, Xiao Wu, Liang-jian Deng]{}
\keywords{Pansharpening, Schr\"odinger bridge matching, Diffusion Model, Score-based Model, SDE, ODE.}
\thanks{$^*$Corresponding author: Liang-Jian Deng.}
\begin{document}
\maketitle

\centerline{\scshape
Zihan Cao$^{{\href{iamzihan666@gmail.com}{\textrm{\Letter}}}1}$, Xiao Wu$^{{\href{tingzhuhuang@126.com}{\textrm{\Letter}}}1}$, Liang-Jian Deng$^{{\href{liangjian.deng@uestc.edu.cn}{\textrm{\Letter}}}*1}$}

\medskip

{\footnotesize
 \centerline{$^1$School of Mathematical Sciences, University of Electronic Science and Technology of China,}
 \centerline{Chengdu, 611731, China}
} 





\bigskip


\begin{abstract}
	Recent diffusion probabilistic models (DPM) in the field of pansharpening have been gradually gaining attention and have achieved state-of-the-art (SOTA) performance. In this paper, we identify shortcomings in directly applying DPMs to the task of pansharpening as an inverse problem: 1) initiating sampling directly from Gaussian noise neglects the low-resolution multispectral image (LRMS) as a prior; 2) low sampling efficiency often necessitates a higher number of sampling steps. We first reformulate pansharpening into the stochastic differential equation (SDE) form of an inverse problem. Building upon this, we propose a Schr\"odinger bridge matching method that addresses both issues.
	We design an efficient deep neural network architecture tailored for the proposed SB matching.
	In comparison to the well-established DL-regressive-based framework and the recent DPM framework, our method demonstrates SOTA performance with fewer sampling steps. Moreover, we discuss the relationship between SB matching and other methods based on SDEs and ordinary differential equations (ODEs), as well as its connection with optimal transport.
	Code will be available.
\end{abstract}

\section{Introduction}
Pansharpening is a special case of super-resolution and falls into the category of classical inverse image restoration problems in remote sensing.
Due to the inherent limitations of the hardware system, existing imaging devices such as Gaofen-2 (GF2), WorldView-4 (WV4), and WorldView-3 (WV3) can only measure low-resolution multispectral (LRMS) $\in \mathbb R^{h\times w\times C}$ images while capturing finer spatial information into grayscale panchromatic (PAN) $\in \mathbb R^{H\times W\times c}$ images (where $c<C$ and $h<H, w<W$). In general, such a trade-off imposes restrictions on many downstream applications that rely on high-resolution multispectral (HRMS) $\in \mathbb R^{H\times W\times C}$ images.

Various approaches have been proposed in the past to address the pansharpening problem. These include model-based methods such as component substitution (CS)~\cite{cs1,cs2,cs3},  multi-resolution analysis (MRA)~\cite{mra1,mra2,mra3,mra4}, and variational optimization (VO) methods~\cite{vo1,vo2,vo3,vo4}, as well as deep regression models~\cite{deng2020detail,deng2023psrt,lagconv,hmpnet,pnn,msdcnn,dcfnet} and the recent diffusion models~\cite{pandiff,DDIF,wu2023hsr}.
Previous model-based methods typically involve transforming pansharpening into an optimization problem using physical models, often yielding suboptimal results. In contrast, deep regression models leverage deep learning by feeding LRMS and PAN inputs into a carefully designed deep neural network to fuse and generate HRMS. Much of the prior work has focused on contributing to the design of more efficient deep neural networks. There are also some works that integrate physical models into the design of their models. They incorporate physical formulas into the forward process of the network or design the solution of an optimization problem as a module within the network. While these methods often achieve good performance, they may introduce additional computational overhead.

The recent diffusion-based model (DPM) has garnered significant attention in various fields such as image and video generation~\cite{liu2024sora,saharia2022image,ho2022imagen,saharia2022palette}, molecular structure generation~\cite{xu2022geodiff}, and reinforcement learning~\cite{zhu2023diffusion}. Many works have applied DPM to pansharpening, achieving state-of-the-art (SOTA) results. The approach of DPM relies on stochastic differential equations (SDEs) or ordinary differential equations (ODEs). Its generation process starts from an easily samplable distribution (\textit{e.g.}, Gaussian distribution) and progressively denoises the Gaussian distribution into the target distribution.
Due to the nature of being an SDE/ODE model, DPM requires multiple steps of network forward evaluation (NFE) during sampling, resulting in a lengthy sampling process. This limitation hinders the current development of DPM. Despite recent efforts to develop efficient SDE or ODE DPM samplers, the quality of sampled outputs remains unsatisfactory when NFE is low. Furthermore, some works utilize distillation to transfer knowledge from a large NFE network to a smaller NFE one, but this still requires additional data and computational resources for distillation.
For solving inverse problems, DPM also finds many applications. Some methods based on singular value decomposition (SVD)~\cite{ddrm} and optimization~\cite{chung2023diffusion} modify the sampling process of DPM, integrating the solution process of the inverse problem into it, yielding promising results. However, these methods require DPM trained on corresponding data; otherwise, due to the significant domain gap (\textit{e.g.,} using Imagenet trained DPM model to restore the remote sensing images), the quality of sampling may degrade.

Recently, DDIF~\cite{DDIF} and PanDiff~\cite{pandiff} inheriting the concept of DPM, utilizes LRMS and PAN as conditional inputs into a diffusion network based on the VP SDE~\cite{ho2020denoising}, performing denoising to transform the Gaussian samples into HRMS. 
Although they have achieved satisfactory pansharpening performance, they all overlook a crucial fact: \textit{pansharpening, as a type of inverse problem, benefits from the known degradation of LRMS to HRMS, which can serve as a prior, they still initiate the sampling process from a Gaussian distribution}, which is unreasonable and counterintuitive.

The Schr\"odinger Bridge (SB) problem was first introduced in quantum mechanics~\cite{schrodinger} and expanded to broader fields such as OT problem. It seeks two optimal policies that transform back-and-forth between two arbitrary distributions in a ﬁnite horizon.
In recent times, there has been a surge in efforts to develop more efficient neural SB solvers for solving SB problems. Examples include solvers based on IPF~\cite{de2021diffusion,shi2024diffusion} and likelihood-based approaches~\cite{neklyudov2022action,chen2021likelihood,wang2021deep}. However, these methods involve \textit{trajectory simulation (simulating the entire SDE trajectory), training multiple forward-backward networks simultaneously, complex learning objectives, and intricate training procedures}. Consequently, applying these methods to large-scale image tasks is not practical.

To address the above issues of 1) long sampling time, 2) starting sampling from a Gaussian distribution in pansharpening of current DPM-based methods, and 3) the impracticality of neural SB solvers for large-scale image problems, we first characterize the pansharpening task based on the characteristics of traditional CS and MRA methods and deduce the form of the SDE in Sect.~\ref{sec:inverse-pan}. We then extend this to degradation SDE and its ODE form in Sect.~\ref{sec:deg-ode-sde}. Building on this, we simplify the degradation SDE/ODE to linear SDE forms using the SB formulation in Sect.~\ref{sect:simu-free-SB}, thereby reducing the sampling difficulty. Furthermore, we provide the parameterization of the proposed SDE and ODE in Sect.~\ref{sect: bmp-train-sampling-algos.}. The main conception of this paper is illustrated in Fig.~\ref{fig:main_conception}. Extensive experiments are conducted to verify the proposed SB SDE and ODE in Sect.~\ref{sect:exp}. Moreover, further discussion with other works is provided in Sect.~\ref{sect:disscuss}.

To summarize, the contributions of this paper are:
\begin{enumerate} 
	\item We summarize the shortcomings of existing DPM-based pansharpening frameworks. Based on the inverse problem formulation of pansharpening, we express it as a unified degradation-recovery SDE/ODE.
	\item Furthermore, by leveraging the SB formula, we improve the degradation-recovery SDE/ODE to linear forward-backward SDE/ODE. This formulation features an analytic forward process, eliminating the need for simulation and exhibiting excellent properties of optimal transport (OT). This makes both training and sampling highly convenient.
	\item We design a more efficient deep neural network architecture for the proposed SB matching, which enhances the effectiveness of SB SDE.
	\item The proposed SB matching method achieves new SOTA performances on three commonly used pansharpening datasets.
\end{enumerate}


\section{Priliminary}
In this section, we will provide a brief review of the Diffusion Probabilistic Model (DPM)~\cite{ho2020denoising,saharia2022image,song2019generative,song2020score}, Optimal Transport~\cite{villani2009optimal,leonard2013survey,korotin2023neural} that can be linked with Schr\"odinger Bridge (SB)~\cite{chen2021likelihood,neklyudov2022action,de2021diffusion,shi2024diffusion,wang2021deep}.

\subsection{Diffusion Probabilistic Model}
\begin{figure}[t]
	\centering
    \includegraphics[width=0.8\linewidth]{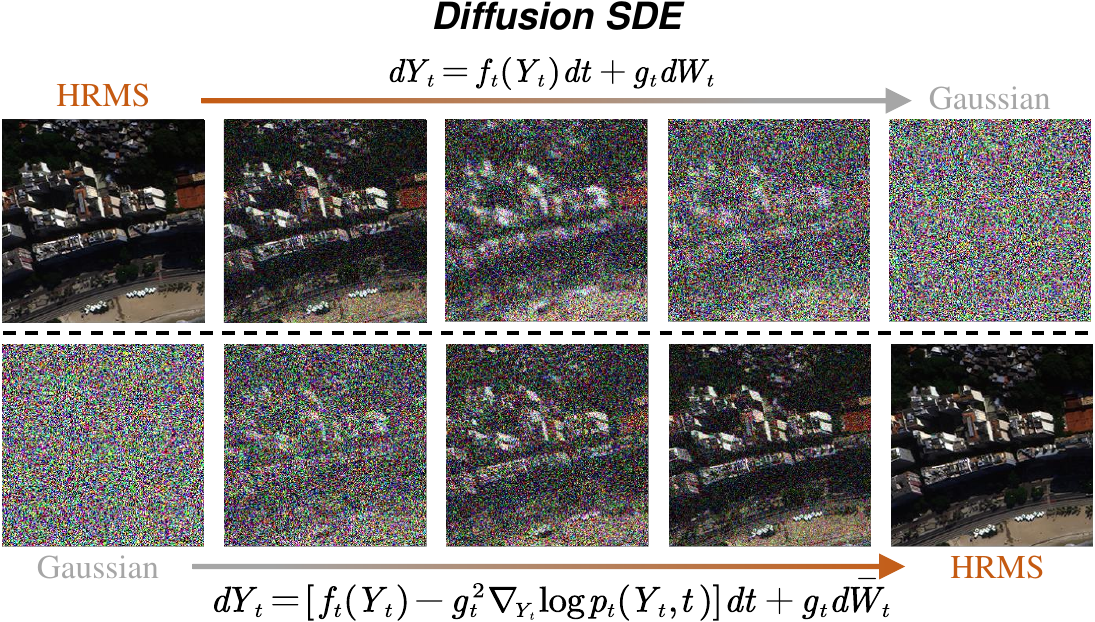}
        \caption{Illustration of diffusion framework. It connects the Gaussian distribution with the HRMS distribution, which is inefficient in handling the pansharpening task. Previous works~\cite{pandiff,DDIF} adopt this scheme.}
	\label{fig:diff-scheme}
	
\end{figure}
Diffusion Probabilistic Model (DPM) uses a little isotropy Gaussian noise to degrade the input image step by step until pure Gaussian noise in the forward process. In the backward process, the trained diffusion model tries to remove the noise from the noisy latent images. DPM forward/backward processes are operated by ODEs or SDEs~\cite{ho2020denoising,song2019generative,karras2022elucidating}. From the viewpoint of score matching~\cite{song2020score,song2019generative} (e.g., a kind of diffusion model), the forward SDE can be formed as,
\begin{equation}
	dX_t=f_t(X_t)dt+g_t dW_t,\ X_0\sim p_X,\ X_1\sim \mathcal N(\mathbf 0, \mathbf I_d), \label{eq:sde}
\end{equation}
where $f_t$ is the drift term and the $g_t$ is the diffusion term, and $W_t$ is the Winner process. We can derive the backward process by using the Fokker Plank (FP) equation~\cite{frank2005nonlinear},
\begin{equation}
	\frac{\partial}{\partial t}p_t(X_t)=-\nabla \cdot (f_t(X_t) p_t(X_t))+\frac{1}{2} g_t^2 \Delta p_t(X_t), \label{eq:fpe}
\end{equation}
where $\nabla \cdot(\cdot)$ is the divergence operator and $\Delta(\cdot)$ is the Laplacian operator. Thus, the backward process can be accomplished,
\begin{equation}
	dX_t=[f_t(X_t)-g_t^2\nabla_{X_t} \log p_t(X_t,t)]dt+g_t d\bar W_t.
\end{equation}
$\bar w_t$ is the reversed Winner process and $\nabla_{X_t} \log p_t$ is known as the score function. 

In practice, it is feasible to analytically sample $X_t$ given $t$ and $X_0$ and to train a neural network $s_\theta$ (often U-Net~\cite{unet}) to predict the score function $\mathbb E_{X_t,t}\|s_\theta(X_t,t;\theta)-\nabla_{X_t}\log p(X_t,t|X_0)\|_2^2$ as same as minimizing the variational lower bound (ELBO),
\begin{equation}
	\begin{aligned}
		L_{vlb}=-\log p_\theta(X_0|X_1) + D_{KL}(p_{T}(X_T|X_0)||p_T(X_T))\\
		+\sum_{t>1}D_{KL}(p_{t-1}(x_{t-1}|x_t,x_0)||p_\theta(X_{t-1}|X_t)).
	\end{aligned}
\end{equation}

Meanwhile, FPE provides an ODE,
\begin{equation}
	dX_t=\left[f_t(X_t)-\frac{1}{2} g_t^2 \nabla_{X_t}\log p_t(X_t,t)\right]dt, \label{eq:diffusion-ode}
\end{equation}
that shares the same marginal distribution which can be effectively solved by some well-studied ODE solvers. Additive parameterization of $f_t(X_t)$ and $g_t$ can be found in the supplementary.

\subsection{Optimal Transport}
The Optimal Transport problem~\cite{villani2009optimal} tries to find the minimal displacement cost from one distribution $p_0$ to another distribution $p_1$ when given a ground cost. Takes a usual two-Wasserstein distance as an example, 
\begin{equation}
	\mathcal W_2({p_0, p_1})^2=\inf_{\pi\in \Pi}\int_{\mathcal X\times \mathcal Y} c(x,y)^2\pi(dx, dy),
\end{equation}
where $\Pi$ is the set of all joint probability measures on space $\mathcal X\times \mathcal Y$ marginalized on $p_X$ and $p_Y$ and $c(\cdot, \cdot)$ denotes the non-negative ground cost. $\pi$ is the OT plan where discrete and continuous OT methods are tried to search. Based on it, we can define the kinetic form given by
\begin{equation}
	\mathcal W_2(q_0,q_1)^2=\inf_{p_t,f_t} \int_{\mathbb R^d}\int_0^1 p_t(X_t)\|f_t\|^2dt dX,
\end{equation}
where $p_t$ is the marginal distribution that obeys the transport equation $\partial_t p_t+\nabla\cdot (p_t f_t)=0$ and has the boundary $p_X,\ p_Y$. The proposed Schr\"odinger Bridge Matching in Sect.~\ref{sect:simu-free-SB} can be linked with Optimal Transport and is discussed in Sect.~\ref{sect:sb-sde-with-ot-relation}.

\subsection{Schr\"odinger Bridge}
SB problem is often defined as
\begin{equation}
	\min_{\mathbb Q\in \mathcal F(p_X, p_Y)}  D_{\text{KL}}(\mathbb Q\| \mathbb P),
	\label{eq: sb-framework}
\end{equation}
where $ \mathcal F(p_X, p_Y) \subset \mathscr P(\Omega)$ is the path measure with the probability density $p_X$ and $p_Y$ at the bridge endpoints and usually $p_X$ is defined as data distribution and $p_Y$ is prior distribution. $\mathbb Q$ and $\mathbb P$ are path measures. 
The goal of an SB problem is to find a $\mathbb P$ that has the same marginal distribution $p_X$ and $p_Y$ at times $t=0$ and $t=1$.
It is also linked with the stochastic optimal control~\cite{dai1991stochastic} which is formulated as follows,
\begin{equation}
	\begin{aligned}
		\min_{u(X_t,t)}&\mathbb E_{p_t}\left[\int_0^1 \frac 1 2 \|u(X_t,t)\|dt\right],\\
		s.t.,\ dX_t&=\left[f_t(X_t)+u_t(X_t,t)\right]dt+g_tdW_t.
		\label{eq:sb-soc}
	\end{aligned}
\end{equation}
This control problem can be converted into a PDE by applying the Hopf-Cole transform~\cite{Hopf1950ThePD},
\begin{subequations}
	\begin{align}
		&\begin{cases}
			\frac{\partial \Psi(X_t,t)}{\partial t} = -\nabla \Psi^\top f_t-\frac 12 \mathrm{Tr}(g_t^2 \nabla_{X_t}^2 \Psi)\\
			\frac{\partial \widehat \Psi(X_t,t)}{\partial t} = -\nabla\cdot (\widehat \Psi^\top f_t)+\frac 12 \mathrm{Tr}(g_t^2 \nabla_{X_t}^2 \widehat \Psi)
		\end{cases} \label{eq:sb-pde}\\
		&\quad s.t,\ \Psi(X_t,0)\widehat\Psi(X_t,0)=p_{\mathcal A}(X)\label{eq:sb-pde-st-a}, \\
		&\qquad\quad \Psi(X_t,1)\widehat\Psi(X_t,1)=p_{\mathcal B}(X)\label{eq:sb-pde-st-b},
	\end{align}
\end{subequations}
where $\Psi, \widehat\Psi\in (\mathbb R^d, [0,1])$ are the energy potentials and $\mathrm{Tr}(\cdot)$ is the matrix trace. From the above PDEs, we can use FPE and non-linear Feynman-Kac formula~\cite{Karatzas1987BrownianMA} to get an SB-inspired SDE:
\begin{equation}
	\hfill
	\begin{cases}
		dX_t=(f_t+g_tZ_t)dt+g_tdW_t \\
		dY_t=\frac 12 Z_t^\top Z_t dt +Z_t^\top dW_t \\
		d \hat Y_t=\left(\frac 12 \hat Z_t^\top \hat Z_t+\nabla_{X_t}\cdot (g_t\hat Z_t-f_t) +\hat Z_t^\top Z_t\right)dt+\hat Z_t dW_t
	\end{cases} \notag
\end{equation}
where $Z_t=g_t\nabla_{X_t} Y_t(X_t)=g_t\nabla_{X_t}\log \Psi(X_t,t)$ and $\hat Z_t=g_t\nabla_{X_t} \hat Y_t(X_t)=g_t\nabla_{X_t}\log \hat\Psi(X_t,t)$ are the SB optimal forward and backward drifts of the SDE.
Akin to the DPM SDE, we can still perform simulated SDE trajectories~\cite{chen2021likelihood} by defining any $f_t$ and $g_t$ which can ensure the convergence to the boundaries. However, simulating trajectories needs to run forward and backward in time and takes huge computational resources, meanwhile, it also requires two networks (\textit{i.e.,} forward and backward networks) simultaneously learning the forwarding and backwarding. Proofs of the derivations from SB problems to PDEs and SB-inspired SDEs can be found in the Supplementary.
Hence, developing a \textit{simulation-free} SB method is both practical and advantageous.

\section{Inverse Problem of Pansharpening and Learning Schr\"odinger Bridge Matching}

\begin{figure}[!t]
	\centering
	\includegraphics[width=\linewidth]{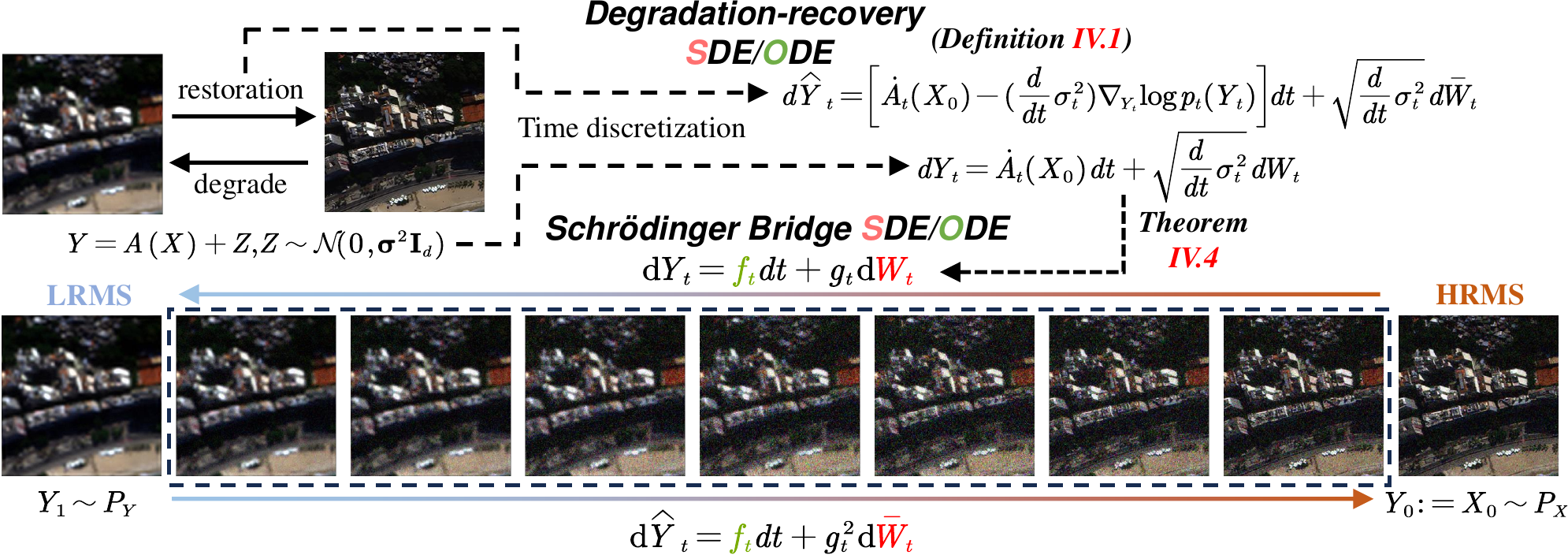}
	\caption{Main concept of the proposed SB SDE and ODE. HRMS is degraded by the forward SB SDE/ODE process (denoted as $q$ process) and recovered by the backward process (denoted as $p$ process). PAN, as the condition in the SB SDE or ODE, is omitted for a clear illustration.}
	\label{fig:main_conception}
\end{figure}

\subsection{Inverse Problems of Pansharpening}
\label{sec:inverse-pan}
First, we introduce the general formula of the inverse problem,
\begin{equation}
	Y=A(X)+Z, Z\sim \mathcal{N}(\mathbf 0, \mathbf \sigma^2 \mathbf I_d), \label{eq: deg}
\end{equation}
where $A(\cdot)$ is a degraded operator that can degrade the clean ground truth, and $Y$ is the measurement added by Gaussian noise with variance $\sigma^2$. For various inverse problems, such as superresolution~\cite{srdiff,saharia2022palette,saharia2022image,saharia2022photorealistic}, colorization~\cite{saharia2022palette,stable_diffusion}, MRI reconstruction~\cite{song2022solving,chung2023diffusion} et al, there are different degraded operators. Eq.~\eqref{eq: deg} also suits the pansharpening task. Recall the two widely-used traditional pansharpening methods (i.e., CS~\cite{cs1,cs2,cs3}, and MRA methods~\cite{mra1,mra2,mra3,mra4}):
\begin{subequations}
	\begin{align}
		\text{CS}:\quad &\mathbf M = \mathbf H - \mathbf g\odot(\mathbf {P^D}-\mathbf {I_L^D}),\\
		\text{MRA}:\quad &\mathbf M = \mathbf H - \mathbf g\odot(\mathbf{P^D}-\mathbf {P_L^D}),
	\end{align}
\end{subequations}
where $\mathbf{P^D}, \mathbf{P_L^D}, \mathbf{I_L^D}, \mathbf M,$ and $\mathbf H$ are the channel-duplicated PAN, channel-duplicated low-passed PAN, channel-duplicated intensity of LRMS, LRMS, HRMS images and $\mathbf g$ is a weighting vector.
From the above equations, a general degraded operator $A$ can be defined by considering $\mathbf M$ and $\mathbf H$ are measurement $Y$ and clean ground truth $X_0$,
\begin{equation}
	A(X|\mathbf{P, M})=X-\mathcal J(X, \mathbf{P}, \mathbf{M}), \label{eq:pan-deg}
\end{equation}
where $\mathcal J(\cdot)$ is a function and it can be noticed that $\mathbf H$ is the clean ground truth and $\mathbf M$ denotes the observation but $Z=0$ in the context of pansharpening. Without considering the Gaussian term $Z$, Eq.~\eqref{eq:pan-deg} tells us the pansharpening task be formulated as a typical inverse problem and different $A$s contribute different inverse problems.

\subsection{Degradation SDE and ODE of Pansharpening}
\label{sec:deg-ode-sde}
In the previous section, we represent pansharpening as the classical inverse problem formulation \eqref{eq: deg}. Next, we represent this process in the form of SDE in the following definition:
\begin{definition}[Stochastic degradation equation]
	\label{definition:stoc-deg-eq}
	The operator $A$ is time-dependent ($A_t$) and first-order differentiable. The same requirement applies to $\mathcal J$. The following SDE defines the degraded process by the operator $A_t$:
	\begin{equation}
		dY_t=\dot{A}_t(X_0)dt+\sqrt{\frac d {dt} \sigma_t^2}dW_t,\label{eq:pan-sde}
	\end{equation}
	where $Y_t$ is discrete in time, $\dot{A}_t$ is the time-dependent derivative of $A$, $\sigma_t^2$ is the variance of the Wiener process $dW_t$ at timestep $t$.
\end{definition}

Thus, we can formalize the process of pansharpening as an SDE. We refer to this process as ``degradation-SDE'' as shown in the upper panel in Fig.~\ref{fig:main_conception}, analogous to the forward process of diffusion in Eq.~\eqref{eq:sde}. Leveraging the FP equation, we can derive the reverse SDE formula:
\begin{equation}
	d \hat Y_t=\left[\dot A_t(X_0) -(\frac d {dt}\sigma_t^2)\nabla_{Y_t} \log p_t (Y_t)\right]dt+\sqrt{\frac d{dt}\sigma_t^2}d\bar W_t,
\end{equation}
where $d\bar W_t$ is the reverse process of the Wiener process~\cite{anderson1982reverse}.
The hat on a symbol represents the random variable when backwarding. This reverse SDE can be discretized using Euler-Maruyama discretization,
\begin{equation}
	\begin{aligned}
		&Y_{t-\Delta t}=Y_t+\underbrace{A_{t-\Delta t}(X_0)-A_t(X_0)}_{\text{recovery step}}\\
		&-\underbrace{(\sigma_{t-\Delta_t}^2-\sigma_t^2)\nabla_{Y_t}\log p_t (Y_t)}_{\text{denoising}}+\sqrt{\sigma_t^2-\sigma_{t-\Delta_t}^2} Z_t.
	\end{aligned}
\end{equation}
Similarly, we term it ``recovery-SDE''.
Naturally, we can derive the ODE forms for both the forward and backward processes:
\begin{subequations}
	\begin{align}
		&dY_t=\dot{A}_t(X_0)dt, \\
		d \hat Y_t=\dot A_t(X_0) &dt-\frac 12(\frac d {dt}\sigma_t^2)\nabla_{Y_t} \log p_t (Y_t). \label{eq:pan-ode}
	\end{align}
\end{subequations}
By parameterizing $\dot{A}_t$ with Gaussian transition kernels, we can directly train and sample the pansharpening task using Eqs.~\eqref{eq:pan-sde}-\eqref{eq:pan-ode}.

However, on the one hand, despite being demonstrated to exhibit good performance in the pansharpening task~\cite{deng22}, DPM often suffers from inefficient sampling due to the nature of Gaussian transition kernels. Especially when using specially customized DPM solvers~\cite{dpm_solver}, the generated quality tends to be low with a small number of sampling steps. In the following section, we demonstrate that this is attributed to the highly curved nature of DPM's sampling trajectory. Therefore, leveraging the properties of the Schr\"odinger Bridge (SB), we straighten the trajectory of DPM, reducing the training difficulty and simultaneously enhancing sampling efficiency.
On the other hand, we can intuitively sense that as $t\to 1$ (sample large enough times e.g., 1000), $Y_1$ can be regarded as a pure Gaussian noise. This implies that during sampling, we also need to initiate sampling from a Gaussian noise (see one endpoint of SDE in Fig.~\ref{fig:diff-scheme}), which is evidently unreasonable as we already possess prior information $\mathbf{M}$.
Hence, initiating sampling from the known prior is a reasonable and efficient approach. Clearly, existing DPM frameworks (\cite{ho2020denoising,saharia2022photorealistic,song2020score} and Definition~\ref{definition:stoc-deg-eq}) do not support it. A clear difference can be observed by comparing Fig.~\ref{fig:diff-scheme} and the lower panel in Fig.~\ref{fig:main_conception}.

\subsection{Simulation-free Schr\"odinger Bridge SDE}
\label{sect:simu-free-SB}
In Sect.~\ref{sec:deg-ode-sde}, we elucidate that the pansharpening task can be fully expressed in the forms of SDE and ODE. We utilized existing DPM frameworks for training and sampling. However, due to the existing shortcomings of DPM, applying it directly to the pansharpening task is both unreasonable and inefficient. To address these issues, we leverage the properties of SB and introduce SB ODE and SDE.

Since SB is known as an entropy-regulized optimal transport model that can be formulated as the following SDEs:
\begin{subequations}
	\begin{align}
		dY_t&=[f_t+g_t^2\nabla_{Y_t} \log p_t(Y_t)]dt+g_tdW_t, \label{pan-sb-f}\\
		d\hat Y_t&=[f_t-g_t^2\nabla_{Y_t} \log p_t(\hat Y_t)]dt+g_td\bar W_t,\label{pan-sb-b}
	\end{align}
\end{subequations}
where the two end-point distributions on the SB become $p_{X}$ and $p_{Y}$.
For Eqs.~\eqref{eq:pan-sde} and \eqref{pan-sb-f}, it can be observed that if we replace $\dot{A}_t$ with $f_t+g_t^2\nabla{Y_t} \log p_t(Y_t)$, we effortlessly transform the degradation-SDE into the form of SB SDE. Incorporating with Eq.~\eqref{eq:pan-deg}, we can further derive the following proposition:
\begin{proposition}\label{prop:inverse-to-sde}
	The pansharpening inverse problem can be represented as SB SDE forms\footnote{For simplicity of the symbols, the condition $\mathbf P$ is omitted.},
	\begin{subequations}
		\begin{equation}
			\begin{aligned}
				f_t&=1,\\
				\dot{\mathcal J}&=g_t^2\nabla_{Y_t} \log p_t(\hat Y_t),\label{eq:inverse-to-sde}
			\end{aligned}
		\end{equation} 
	\end{subequations}
	which is an implicit learning objective by setting $s_\theta:=\dot{\mathcal J}$. $s_\theta$ is a neural network, usually a U-Net~\cite{unet}.
	\label{cor: SB-SDE-pan}
\end{proposition}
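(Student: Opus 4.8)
The plan is to establish the identification by directly matching the two SDE descriptions term by term. The degradation-SDE from Definition~\ref{definition:stoc-deg-eq} reads $dY_t=\dot A_t(X_0)\,dt+\sqrt{\tfrac{d}{dt}\sigma_t^2}\,dW_t$, and via Eq.~\eqref{eq:pan-deg} we have $A_t(X|\mathbf{P},\mathbf M)=X-\mathcal J(X,\mathbf P,\mathbf M)$ at each time, so its time derivative along a fixed $X_0$ is $\dot A_t(X_0)=\dot X_0-\dot{\mathcal J}=-\dot{\mathcal J}$ when $X_0$ is held fixed; more usefully, writing $A_t(X_0)=X_0-\mathcal J_t$, the drift decomposes as a trivial ``identity'' part plus the $\mathcal J$ part. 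The Schr\"odinger Bridge forward SDE in Eq.~\eqref{pan-sb-f} reads $dY_t=[f_t+g_t^2\nabla_{Y_t}\log p_t(Y_t)]\,dt+g_t\,dW_t$. First I would equate the two drift expressions: requiring $\dot A_t(X_0)=f_t+g_t^2\nabla_{Y_t}\log p_t(Y_t)$ and substituting $A_t(X_0)=X_0-\mathcal J_t$ gives, after separating the part that does not involve the score, the choice $f_t=1$ (absorbing the ``$X_0$''/identity contribution of the drift into the constant drift coefficient) together with $\dot{\mathcal J}=g_t^2\nabla_{Y_t}\log p_t(\hat Y_t)$, which is exactly \eqref{eq:inverse-to-sde}.

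Next I would match the diffusion coefficients, noting that $\sqrt{\tfrac{d}{dt}\sigma_t^2}=g_t$ fixes the relation between the variance schedule $\sigma_t^2$ of the stochastic degradation equation and the SB diffusion term $g_t$; this is consistent and imposes no further constraint beyond a reparameterization of the noise schedule. With the drift and diffusion matched, Eqs.~\eqref{eq:pan-sde} and \eqref{pan-sb-f} coincide, so the forward degradation process \emph{is} the SB forward SDE with the stated parameters, and the same substitution applied to the reverse SDE \eqref{pan-sb-b} shows the recovery process coincides with the SB backward SDE. Finally, since $\dot{\mathcal J}$ now plays the role of $g_t^2\nabla_{Y_t}\log p_t(\hat Y_t)$, replacing the (intractable) score by a neural network $s_\theta$ and setting $s_\theta:=\dot{\mathcal J}$ turns the identity into an implicit learning target: training $s_\theta$ to model $\dot{\mathcal J}$ is equivalent to score matching for this bridge, which is the claimed implicit objective.

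The main obstacle I anticipate is the bookkeeping around the ``$f_t=1$'' identification: one must be careful about what is being differentiated in $\dot A_t$ (the explicit time dependence of $A_t$ versus the dependence through $Y_t$), and about absorbing the non-score portion of $\dot A_t(X_0)$ into a constant drift $f_t$ in a way that is genuinely consistent with the Fokker--Planck/reverse-SDE derivation used earlier. In particular I would want to check that the linear (identity) drift is compatible with the SB optimality structure, i.e., that the resulting forward/backward pair still has the correct marginals $p_X$ at $t=0$ and $p_Y$ at $t=1$; this is where the simulation-free, analytic-forward property claimed in Sect.~\ref{sect:simu-free-SB} is really being invoked, and the cleanest route is to verify it against the Gaussian transition-kernel parameterization promised for Sect.~\ref{sect: bmp-train-sampling-algos.} rather than re-deriving it from scratch here. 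The remaining steps (diffusion-coefficient matching and the $s_\theta:=\dot{\mathcal J}$ substitution) are routine once the drift identification is pinned down.
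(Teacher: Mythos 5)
Your term-by-term drift matching is exactly the argument the paper intends: the text preceding the proposition instructs the reader to identify $\dot A_t$ with $f_t+g_t^2\nabla_{Y_t}\log p_t(Y_t)$ and read off $\dot{\mathcal J}$ as the score term, and the paper's own ``proof'' consists solely of the one-line remark that the proposition is sufficient, with no further derivation. Your reconstruction is therefore the same route but more careful than the source, and the bookkeeping concerns you flag --- the sign of $\dot A_t(X_0)=-\dot{\mathcal J}$ when $X_0$ is held fixed, and in what sense the identity part of $A(X)=X-\mathcal J$ yields a constant drift $f_t=1$ rather than a state-dependent one --- are genuine loose ends that the paper itself never resolves.
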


\begin{proof}
	The proof of the Proposition~\ref{cor: SB-SDE-pan}  is sufficient.
\end{proof}

\begin{remark}
	Up to this point, it can be realized that the introduced SB-SDE can be seamlessly applied to the inverse problem's core, as we can employ a neural network to parameterize the degradation operator determined in the inverse problem (which may not be known). Moreover, relying on the multi-step diagram of the SDE, we can use the time steps as input (or conditions) to decompose a challenging single-step inverse degenerate process into multiple steps. This can alleviate the learning difficulty for the network, aligning with the observation in~\cite{DDIF}.
\end{remark}
However, in Eqs.~\eqref{pan-sb-f} and \eqref{pan-sb-b}, it is evident that these SDEs are non-linear. Moreover, the nonlinear term $\nabla_{Y_t} \log p_t(Y_t)$ has not been explicitly defined, making the solution of these SDEs challenging. Additionally, due to the presence of the nonlinear term, \textit{the trajectories of the SDE are not straight}, posing a challenge for sampling.
Recently, there has been research attention to the use of the Hopf-Cole transform~\cite{Hopf1950ThePD}, which enables the conversion of nonlinear terms into linear ones. However, these studies are confined to the sampling stage, transforming the PF ODE~\cite{song2019generative}, and the effectiveness of diffusion SDE remains unexplored.

Different from the previous diffusion works, we express SB in the form of linear SDEs.
\begin{theorem}
	\label{theorem:sb-sde}
	The Schrödinger Bridge problem in Eq.~\eqref{eq:sb-pde} can be expressed as the following forward-backward SDE with a linear $f_t$, which shares a form similar to that of a score-based model,
	\begin{subequations}
		\begin{align}
			&dY_t=f_tdt+g_tdW_t, &Y_0 \sim \hat \Psi(\cdot, 0), \label{pan-sb-f2}\\
			&d\hat Y_t=f_tdt+g_td\bar W_t, &Y_1\sim \Psi(\cdot, 1). \label{pan-sb-b2}
		\end{align}
	\end{subequations}
	The nonlinear terms in forward and backward drifts $\nabla_t\log \Psi, \nabla_t\log \hat \Psi$ have been absorbed into the initial conditions $\Psi(Y_t,0)$ and $\hat \Psi(Y_t,1)$.
\end{theorem}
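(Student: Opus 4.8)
The plan is to read the coupled PDE system \eqref{eq:sb-pde} as the forward and backward Kolmogorov equations of one fixed linear diffusion, and then to quote the standard PDE$\,\leftrightarrow\,$SDE dictionary. Write $\mathcal L_t\phi := f_t^{\top}\nabla\phi+\tfrac12 g_t^{2}\Delta\phi$ for the generator of the reference SDE $dY_t=f_t\,dt+g_t\,dW_t$ and $\mathcal L_t^{*}\rho := -\nabla\!\cdot(f_t\rho)+\tfrac12 g_t^{2}\Delta\rho$ for its formal adjoint. The second line of \eqref{eq:sb-pde} is exactly $\partial_t\widehat\Psi=\mathcal L_t^{*}\widehat\Psi$, i.e.\ the Fokker--Planck equation of $dY_t=f_t\,dt+g_t\,dW_t$; hence, with Cauchy datum $\widehat\Psi(\cdot,0)$, the function $\widehat\Psi(\cdot,t)$ is the time-$t$ marginal of the forward SDE \eqref{pan-sb-f2}. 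Dually, the first line is $\partial_t\Psi+\mathcal L_t\Psi=0$, so $\Psi(Y_t,t)$ is a martingale along $dY_t=f_t\,dt+g_t\,dW_t$ and therefore $\Psi(y,t)=\mathbb E[\Psi(Y_1,1)\mid Y_t=y]$; this is precisely the transport effected by the reverse-time SDE \eqref{pan-sb-b2} started from the terminal law $\Psi(\cdot,1)$. Since the Schr\"odinger marginal is $p_t^{\mathrm{SB}}=\Psi(\cdot,t)\,\widehat\Psi(\cdot,t)$, the pair \eqref{pan-sb-f2}--\eqref{pan-sb-b2} endowed with the boundary laws $\widehat\Psi(\cdot,0)$ and $\Psi(\cdot,1)$ encodes exactly the information in \eqref{eq:sb-pde}, which is the claimed equivalence.

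Next I would explain where the nonlinear drifts went. Feeding the data law $p_{\mathcal A}=\widehat\Psi_0\Psi_0$ into the bare drift $f_t$ does \emph{not} propagate the full SB density $p_t^{\mathrm{SB}}$: the extra terms $g_t^{2}\nabla\log\Psi_t$ (forward) and $-g_t^{2}\nabla\log\widehat\Psi_t$ (backward) appearing in the SB-inspired SDE of the preliminaries are exactly what is needed to do so. The assertion of the theorem is that one may trade that nonlinear drift for a change in the quantity one tracks: rather than evolving $p_t^{\mathrm{SB}}$, one evolves the two half-potentials $\widehat\Psi_t$ and $\Psi_t$ separately --- each solving a \emph{linear} PDE, hence each carried by a drift-$f_t$ diffusion --- and reconstitutes $p_t^{\mathrm{SB}}=\widehat\Psi_t\Psi_t$ at the end. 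The constraints \eqref{eq:sb-pde-st-a}--\eqref{eq:sb-pde-st-b}, namely $\widehat\Psi_0\Psi_0=p_{\mathcal A}$ and $\widehat\Psi_1\Psi_1=p_{\mathcal B}$, then determine the initial datum $\widehat\Psi(\cdot,0)$ and the terminal datum $\Psi(\cdot,1)$ (uniquely up to a reciprocal scalar, by the classical solvability of the Schr\"odinger system), so all the nonlinearity has simply migrated from the drifts into these two boundary profiles.

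The step I expect to be the real obstacle is making the backward half airtight. Anderson's time-reversal formula applied naively to $dY_t=f_t\,dt+g_t\,dW_t$ produces a reversed drift $f_t-g_t^{2}\nabla\log\rho_t$ with $\rho_t$ the forward density, not the bare $f_t$ written in \eqref{pan-sb-b2}; the reconciliation is that \eqref{pan-sb-b2} governs the co-state $\Psi_t$ --- an unnormalised potential, not a probability density --- and the would-be score correction is precisely $-g_t^{2}\nabla\log\widehat\Psi_t$, which is absorbed once one passes from $p_t^{\mathrm{SB}}$ to the factor $\Psi_t$, mirroring the forward case. The clean way to make this rigorous --- and the way I would actually write it --- is a direct substitution check: assuming $\Psi,\widehat\Psi$ solve the two linear PDEs with the stated data, set $p_t:=\Psi_t\widehat\Psi_t$ and verify via the product rule and both PDEs that $p_t$ satisfies the Fokker--Planck equation of the nonlinear SB SDE $dX_t=(f_t+g_t^{2}\nabla\log\Psi_t)\,dt+g_t\,dW_t$ with $p_0=p_{\mathcal A}$ and $p_1=p_{\mathcal B}$ (and symmetrically for the reverse drift), which identifies the linear forward--backward system \eqref{pan-sb-f2}--\eqref{pan-sb-b2} as a drift-linear representation of the bridge \eqref{eq:sb-pde}. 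Finally I would note that linearity of $f_t$ --- in fact $f_t\equiv 1$ by Proposition~\ref{cor: SB-SDE-pan} --- makes the reference transition kernel Gaussian, so the forward process has a closed form and needs no trajectory simulation, which is the ``simulation-free'' gain claimed in Sect.~\ref{sect:simu-free-SB}.
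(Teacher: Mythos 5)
Your proposal is correct and follows essentially the same route as the paper's own proof: both identify the $\widehat\Psi$ equation in \eqref{eq:sb-pde} as the Fokker--Planck equation of the drift-$f_t$ reference diffusion and the $\Psi$ equation as its time-reversed/backward Kolmogorov counterpart, so that the nonlinearity is pushed entirely into the boundary data $\widehat\Psi(\cdot,0)$ and $\Psi(\cdot,1)$. Your treatment is somewhat more careful than the paper's --- in particular the martingale characterization of $\Psi$, the explicit reconciliation with Anderson's time-reversal formula, and the substitution check via $p_t=\Psi_t\widehat\Psi_t$ address exactly the points the paper's two-line comparison of PDE forms leaves implicit.
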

\begin{proof}
	$dY_t=f_tdt+g_tdW_t$ shows that the probability change is an It$\hat{\text{o}}$ process, which can be reformulated by the FP equation,
	\begin{align}
		\frac{\partial p(Y,t)}{\partial t} = -\nabla\cdot (f_t p_t)+\frac 12 g_t \Delta p_t.
		\label{eq:sb-pf}
	\end{align}
	By comparing \eqref{eq:sb-pde} and \eqref{eq:sb-pf}, and using
	\begin{align}
		\mathrm{Tr}(\nabla_{Y_t} \Psi) = \Delta \Psi,	
	\end{align}
	it shows that the $\frac{\partial(Y_t, t)}{\partial t}$ has the same form when taking $\hat \Psi=p_t$. When reverse the SB PDE~\eqref{eq:sb-pde}:
	\begin{align}
		\begin{cases}
			\frac{\partial \Psi(Y,s)}{\partial s} = \nabla \Psi^\top f_s+\frac 12 \mathrm{Tr}(g_s^2 \nabla_{Y_s}^2 \Psi),\\
			\frac{\partial \widehat \Psi(x,t)}{\partial s} = \nabla\cdot (\widehat \Psi^\top f_s)-\frac 12 \mathrm{Tr}(g_s^2 \nabla_{Y_s}^2 \widehat \Psi),
		\end{cases} \label{eq:sb-pde-r}
	\end{align}
	where $s:=1-t$. This implies that $\Psi(Y,s)$ can be interpreted as the density of one SDE
	\begin{align}
		dY_s=-f_sds+g_sdW_s,
	\end{align}
	which is equal to Eq.~\eqref{pan-sb-b2}. In the rest of this paper, we set $t:=1-s$ under the continuous time SB context\footnote{We simultaneously define $X_s$ and $Y_t$ for a better description of SB in the context of the inverse problem. Such a definition is reasonable.}.
\end{proof}

It is easy to observe that we have directly removed the nonlinear term, absorbing it into the initial condition $\hat \Psi(\cdot, 0)$, leaving only completely linear terms and keeping the conclusion in Eqs.~\eqref{eq:inverse-to-sde} unchanged. Therefore, we can parameterize $\nabla_{Y_t}\log \hat\Psi(Y_t)$ as a network $s_\theta$, similar to training and sampling in DPM.

It's worth noting that our approach has a more appealing characteristic, i.e., it is simulation-free. Reviewing previous methods to solve the SB problems, such as the Iterative Proportional Fitting (IPF)~\cite{sinkhorn1964relationship} methods (\textit{a.k.a.} Sinkhorn algorithm), they require a simulation to regularize the entire trajectory of the SDE. Even recent deep transport methods (\textit{e.g.}, FB-SDE~\cite{chen2021likelihood}, Action Matching~\cite{neklyudov2022action}) still necessitate simulation, which is impractical for some high-dimensional data (\textit{e.g.}, multispectral or hyperspectral images) to simulate the entire trajectory and cache it. Our method aligns with score-based models in being simulation-free, and this is also why our approach can adapt to high-dimensional data.

\subsection{Schr\"odinger Bridge Matching for Pansharpening and its ODE version}
\label{sec:sb-sde-ode}

Similar to Probability Flow (PF)~\cite{lai2023fp,song2019generative} but distinct, we can derive an ODE with optimal transport properties from Eqs.~\eqref{eq:sb-pde}-\eqref{eq:sb-pde-st-b}. We designate this ODE as ``SB ODE''.
\begin{corollary} \label{cor:OT-ODE}
	When $g_t\to 0$, the proposed SDEs between $(X, Y)$ can be expressed by an ODE with the same posterior mean of SB SDE in Theorem~\ref{theorem:sb-sde} and exhibit the OT property:
	\begin{subequations}
		\begin{align}
			&dY_t=v_t(Y_t|X_0)dt,\\
			&dY_s=-v_t(Y_s|Y_t)ds,
		\end{align}
	\end{subequations}
\end{corollary}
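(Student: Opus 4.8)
\textbf{Proof proposal for Corollary~\ref{cor:OT-ODE}.}

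The plan is to obtain the claimed ODE as the zero-noise limit of the probability-flow equation associated with the linear forward--backward SDE of Theorem~\ref{theorem:sb-sde}, and then to identify the resulting deterministic drift with an optimal-transport displacement field. First I would recall the Fokker--Planck equation \eqref{eq:sb-pf} for the marginal $p_t$ of the forward SDE \eqref{pan-sb-f2}, and rewrite it in continuity-equation form $\partial_t p_t+\nabla\cdot(p_t v_t)=0$ by collecting the diffusion term into an effective velocity, i.e. setting $v_t(Y_t\mid X_0)=f_t-\tfrac12 g_t^2\nabla_{Y_t}\log p_t(Y_t)$. By construction this velocity field generates a deterministic flow $dY_t=v_t(Y_t\mid X_0)\,dt$ sharing the same one-time marginals, and hence the same posterior mean, as the SB SDE; this is exactly the PF-ODE reduction already used for the diffusion ODE in \eqref{eq:diffusion-ode}, now applied to the linear-drift SB SDE. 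The backward equation $dY_s=-v_t(Y_s\mid Y_t)\,ds$ with $s=1-t$ follows from the time reversal established in the proof of Theorem~\ref{theorem:sb-sde}, since reversing the continuity equation only flips the sign of the velocity while leaving the marginals untouched.

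The second step is to take $g_t\to 0$. In this limit the score term $-\tfrac12 g_t^2\nabla\log p_t$ formally vanishes, but the nonlinear structure has already been absorbed into the endpoint conditions $\Psi(\cdot,1)$ and $\hat\Psi(\cdot,0)$ as in Theorem~\ref{theorem:sb-sde}, so the marginal path $p_t$ itself tends to the displacement interpolation between $p_X$ and $p_Y$ rather than degenerating. I would invoke the classical fact (e.g.\ from the Benamou--Brenier kinetic formulation quoted in the Optimal Transport subsection, with cost $\tfrac12\int p_t\|v_t\|^2\,dt$) that the zero-entropy limit of the Schr\"odinger Bridge is the quadratic-cost optimal transport, whose geodesic is precisely generated by a velocity field $v_t$ solving the continuity equation with boundary data $p_X,p_Y$. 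Thus the limiting $v_t$ is the Benamou--Brenier minimizer, which gives the OT property; conditioning on $X_0$ (resp.\ $Y_t$) specializes this to the pairwise displacement map between the two endpoints, yielding the stated forward and backward ODEs.

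I would close by checking consistency: the deterministic flow preserves the marginals of the SB SDE at every $t$, so in particular $Y_0\sim p_X$ and $Y_1\sim p_Y$, matching the endpoint constraints \eqref{eq:sb-pde-st-a}--\eqref{eq:sb-pde-st-b}; and the posterior mean $\mathbb E[Y_t\mid X_0]$ agrees with that of the SDE because both evolve the same conditional marginal. The main obstacle I expect is making the $g_t\to0$ limit rigorous: one must argue that the family of SB marginals $\{p_t^{g}\}$ converges (in, say, $\mathcal W_2$) to the OT displacement interpolation and that the associated velocity fields converge in a sense strong enough to pass to the limit in the continuity equation, which is the content of the $\Gamma$-convergence of entropic OT to Monge--Kantorovich. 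For the purposes of this paper I would treat this limit at the formal level standard in the diffusion literature, citing the OT-limit results, and emphasize that the construction of $v_t$ from the FP equation and the time-reversal argument are exact; only the identification of the limit with the OT geodesic relies on the cited convergence theory.
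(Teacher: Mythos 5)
There is a genuine gap in the central limiting step. You write that as $g_t\to 0$ ``the score term $-\tfrac12 g_t^2\nabla\log p_t$ formally vanishes.'' It does not: for the conditional bridge the relevant potential is Gaussian with variance proportional to $\sigma_t^2=\int_0^t\beta_\tau\,d\tau$, so $\nabla_{Y_t}\log\hat\Psi(Y_t|X_0)=\tfrac{1}{\sigma_t^2}(Y_t-X_0)$ scales like $1/g_t^2$, and the product $g_t^2\nabla\log\hat\Psi=\tfrac{\beta_t}{\sigma_t^2}(Y_t-X_0)$ stays $O(1)$ in the limit --- this surviving term \emph{is} the velocity $v_t$ of Eq.~\eqref{eq:ot-ode}. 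Taken literally, your limit (with the default $f_t=0$) yields $v_t\equiv 0$, i.e.\ no transport at all; the subsequent appeal to Benamou--Brenier is then being used to conjure a nonzero velocity that your own construction has killed. The paper's argument (implicit in Proposition~\ref{prop:analtic-posterior} and carried out in Corollary~\ref{cor: SB ODE}) avoids this entirely: it works directly with the closed-form Gaussian posterior \eqref{eq: mean and variance}, observes that under $\beta_t\to 0$ the variance $\Sigma_t=\sigma_t^2\hat\sigma_t^2/(\sigma_t^2+\hat\sigma_t^2)$ vanishes while the mean $\mu_t$ is scale-invariant, so the bridge collapses to the deterministic interpolation \eqref{eq: Y_t-in-ODE}, and reads off $v_t$ as the surviving drift $\beta_t\nabla\log\hat\Psi$.

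Two further mismatches with the paper's route. First, you build the ODE as the probability-flow reduction of the SDE, whereas the paper explicitly states that this SB ODE ``is not a PF ODE'': the PF ODE at finite $g_t$ preserves the Gaussian conditional marginal with nonzero variance, while the object in Corollary~\ref{cor:OT-ODE} is the $g_t\to 0$ limit of the SDE itself, under which the conditional law degenerates to a point mass on $\mu_t(X_0,Y_1)$. Second, your OT identification targets the unconditional displacement interpolation between $p_X$ and $p_Y$ via $\Gamma$-convergence of entropic OT, but the corollary's $v_t(Y_t|X_0)$ is conditional on the paired endpoints; the OT property the paper needs is the elementary fact that, given the coupling, the limiting trajectory is the straight-line interpolation (hence zero curvature and minimal transport cost for convex costs, as in the appendix), not the full Monge--Kantorovich geodesic between the marginals. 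The heavy convergence machinery is therefore both unnecessary and aimed at a different statement.
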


It is worth noting that this ODE is not a PF ODE, and it simulates the OT plan~\cite{gushchin2022entropic,tong2023improving} when the corresponding stochastic terms in the SDE~\eqref{pan-sb-f2} vanish.
In this context, $v_t$ determines the speed of transition from $X_0$ to $Y_t$. If $v_t$ is simple and tractable, similar to the approach used in score-based models~\cite{song2019generative,song2020score}, we still can perform a effecient forward process and a stimulation-free train. This enables us to conveniently carry out the reverse ODE sampling process (sampling from $Y_t$ all the way back to $X_0$), aligning with our initial motivations.
We further elucidate in the Supplementary the relationship between our SB ODE and Flow Matching~\cite{lipman2023flow}, as well as its ability to derive a similar form to the Classifier Guidance~\cite{ho2022classifier} in DPMs. This paves the way for controlling image generation.

\subsection{Bridge Matching Parameterization, Train/Sampling Algorithms and Training Objectives}
\label{sect: bmp-train-sampling-algos.}

\SetCommentSty{myCommentStyle}
\newcommand\myCommentStyle[1]{\textcolor{green1}{#1}}
\SetKwComment{Comment}{$\triangleright$ }{}
\normalem
\begin{algorithm}[htbp]
	\caption{Training scheme}
	\label{algo:train}
	\KwIn{Undegraded samples $p_X$, degraded samples $p_Y$, other paired conditions $p_c$ (optional) and SB matching network $s_\theta$, learning rate $\eta$.}
	\KwOut{Trained SB SDE/ODE matching parameter $\theta$.}
	\While{until convergence}{
		\Comment{Construct time steps $t$ and two bridge endpoints $Y_0:=X_0$ and $Y_1$.}
		$t\sim \mathcal U(0, 1)$, $X_0\sim p_X, Y_1\sim p_Y, C\sim p_c;$\\
		\Comment{Sample samples on SB bridge (SB SDE~\eqref{eq:discri-forward-sde} or ODE~\eqref{eq: Y_t-in-ODE}).}
		$Y_t\sim \mathcal N(Y_t;\mu_t(Y_0, Y_1), \Sigma_t)$ or $Y_t=\frac{\hat\sigma_t^2}{\hat \sigma_t^2+\sigma_t^2} X_0+\frac{\sigma_t^2}{\hat \sigma_t^2+\sigma_t^2} Y_1;$\\
		\Comment{Take gradient step using one of four proposed bridge matching objectives.}
		$\theta \leftarrow \theta - \eta \nabla_{\theta} \mathcal  L(s_\theta(Y_t, C, t), Y_0, Y_1)$.
	}
\end{algorithm}
\begin{algorithm}[htbp]
	\caption{Sampling scheme}
	\label{algo:eval}
	\KwIn{Sampling timesteps $T$, trained $s_\theta$, degraded samples $p_0$, degraded samples $p_Y$, Conditional samples $p_c$ (optional).}
	\KwOut{Sampled data $X_0$.}
	\SetKwFunction{linspace}{linspace} 
	$T_s = \linspace(1,0,T)$ \Comment*[r]{Init timestep sequence.}
	$Y_1\leftarrow p_Y, C\leftarrow p_c$;\\
	\For{$t\leftarrow T_s$}{
		$\hat X_0\leftarrow s_\theta(Y_t, C, t)$\Comment*[r]{Predicted endpoint.}
		$Y_{t} \leftarrow p(Y_{t}|\hat X_0, Y_{t+\Delta t})$\Comment*[r]{Posterior sampling~\eqref{eq: sample}.} 
	}
	$X_0\leftarrow Y_0$.
\end{algorithm}

In this section, we perform necessary parameter design for SB SDE and SB ODE, including the design of $f_t$ and $g_t$ in the Eqs.~\eqref{pan-sb-f2} and~\eqref{pan-sb-b2}, as well as $\sigma_t$ in Eq.~\eqref{eq:ot-ode}.
We provide three feasible parameterization schemes, all of which can satisfy the assumptions of the SB system induced by Eqs.~\eqref{eq:sb-pde}-\eqref{eq:sb-pde-st-b}. Then, the different training objectives are derived and we further provide the training and sampling algorithms~\ref{algo:train} and \ref{algo:eval}. We follow the notation of \cite{ho2020denoising} by denoting the forward process as $q$ and backward process as $p$.

Since the derived SB SDE/ODE results in a non-Gaussian endpoint, we set the drift term $f_t:=0$ and further derive a forward process with an analytical solution. This approach resolves the computational intractability issue of Eqs.~\eqref{pan-sb-f} and \eqref{pan-sb-b}, as well as the problem of not converging to regions of high probability density caused by Eqs.~\eqref{pan-sb-f2} and \eqref{pan-sb-b2}.

\begin{proposition}[Analytic posterior when given bridge endpoints] When setting $f_t:=0$, $g_t:=\sqrt{\beta_t}$ (
	$\beta_t$ is a function that increases for $t\in [0, 1/2]$ and decrease $t\in (1/2,1]$), and given paired bridge endpoints $(X_0, Y_1)$, the posterior distribution has an analytic form:
	\label{prop:analtic-posterior}
	\begin{align}
		&q(Y_t|X_0, Y_1)=\mathcal N(Y_t;\mu_t(X_0, Y_1), \Sigma_t),\label{eq:discri-forward-sde} \\
		\mu_t=&\frac{\hat\sigma_t^2}{\hat \sigma_t^2+\sigma_t^2} X_0+\frac{\sigma_t^2}{\hat \sigma_t^2+\sigma_t^2} Y_1, \Sigma_t=\frac{\sigma_t^2\hat\sigma_t^2}{\hat\sigma_t^2+\sigma_t^2}\mathbf I, \label{eq: mean and variance}
	\end{align}
	where $\sigma_t^2:=\int_0^t \beta_\tau d_\tau$ and $\hat \sigma_t^2:=\int_t^1 \beta_\tau d\tau$ are variances of Gaussians. This analytic form avoids recursively running the poterior sampling in the above forward SDE:
	\begin{equation}
		q(Y_t|X_0, Y_1) = \int \prod_{k=n}^{N-1} p(Y_{k}|X_0,Y_{k+1})dY_{k+1},
	\end{equation}
	making the forward sampling much more effecient.
\end{proposition}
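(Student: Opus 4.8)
\textbf{Proof proposal for Proposition~\ref{prop:analtic-posterior}.}

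The plan is to realize $q(Y_t\mid X_0, Y_1)$ as a Brownian bridge (more precisely, a time-changed Brownian bridge) pinned at $Y_0 = X_0$ and $Y_1$, and to read off its Gaussian marginal directly. First I would set $f_t := 0$, $g_t := \sqrt{\beta_t}$ in the forward SDE~\eqref{pan-sb-f2}, so that the unconditioned forward process is $dY_t = \sqrt{\beta_t}\, dW_t$, i.e. a time-changed Wiener process with $\mathrm{Var}(Y_t - Y_0) = \int_0^t \beta_\tau\, d\tau =: \sigma_t^2$. The key observation is that, thanks to Theorem~\ref{theorem:sb-sde}, the nonlinear SB drift has been absorbed into the endpoint conditions, so conditioning the linear SDE on both endpoints $(X_0, Y_1)$ is exactly the classical pinned-diffusion (bridge) construction. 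Concretely, I would write $Y_t = X_0 + B_{\sigma_t^2}$, where $B$ is a standard Brownian motion, condition on $Y_1 = X_0 + B_{\sigma_1^2}$, and use the standard Gaussian conditioning formula for $(B_{\sigma_t^2}, B_{\sigma_1^2})$, whose joint covariance is $\begin{pmatrix}\sigma_t^2 & \sigma_t^2 \\ \sigma_t^2 & \sigma_1^2\end{pmatrix}\mathbf I$ (after the deterministic time change $u = \sigma_\tau^2$, the process is literally Brownian motion).

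The main computation is then the conditional law of a bivariate Gaussian. Setting $\sigma_1^2 = \sigma_t^2 + \hat\sigma_t^2$ where $\hat\sigma_t^2 := \int_t^1 \beta_\tau\, d\tau$, the conditional mean of $B_{\sigma_t^2}$ given $B_{\sigma_1^2} = y_1 - X_0$ is $\frac{\sigma_t^2}{\sigma_1^2}(y_1 - X_0)$, and the conditional variance is $\sigma_t^2 - \frac{(\sigma_t^2)^2}{\sigma_1^2} = \frac{\sigma_t^2\hat\sigma_t^2}{\sigma_t^2 + \hat\sigma_t^2}$. Adding back $X_0$ gives
\begin{equation*}
	\mathbb E[Y_t \mid X_0, Y_1] = X_0 + \frac{\sigma_t^2}{\sigma_t^2+\hat\sigma_t^2}(Y_1 - X_0) = \frac{\hat\sigma_t^2}{\hat\sigma_t^2+\sigma_t^2}X_0 + \frac{\sigma_t^2}{\hat\sigma_t^2+\sigma_t^2}Y_1,
\end{equation*}
which is exactly $\mu_t$ in~\eqref{eq: mean and variance}, and the covariance is $\Sigma_t = \frac{\sigma_t^2\hat\sigma_t^2}{\hat\sigma_t^2+\sigma_t^2}\mathbf I$, as claimed. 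For the final ``avoids recursive sampling'' assertion, I would invoke the Markov (Chapman--Kolmogorov) property of the bridge process: the finite-dimensional consistency $q(Y_t\mid X_0, Y_1) = \int \prod_{k=n}^{N-1} p(Y_k \mid X_0, Y_{k+1})\, dY_{k+1}$ is just the statement that marginalizing the pinned Gauss--Markov chain over intermediate nodes recovers the two-endpoint conditional, which follows since each transition $p(Y_k\mid X_0, Y_{k+1})$ is itself the corresponding bridge posterior and Gaussians are closed under this marginalization.

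The step I expect to be the main obstacle is making the role of $\beta_t$ and the monotonicity hypothesis ($\beta_t$ increasing on $[0,1/2]$, decreasing on $(1/2,1]$) precise: this condition is what guarantees $\sigma_t^2$ and $\hat\sigma_t^2$ are both strictly positive for $t \in (0,1)$ with $\sigma_t^2 \to 0$ as $t\to 0$ and $\hat\sigma_t^2 \to 0$ as $t\to 1$, so that $\Sigma_t$ is well-defined and the bridge degenerates correctly to the Dirac masses at the endpoints; one must also check the time-change $u = \sigma_\tau^2$ is a valid (strictly increasing, hence invertible) reparametrization, which requires $\beta_\tau > 0$ a.e. I would also need to note that the covariance $\Sigma_t$ is independent of $X_0$ and $Y_1$ — a feature of the linear ($f_t = 0$) dynamics — which is precisely why the forward process admits the stated closed form and why Theorem~\ref{theorem:sb-sde}'s absorption of the nonlinear drift into the boundary data is essential here. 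The remaining algebra (verifying $\int_0^t\beta + \int_t^1\beta = \sigma_1^2$ and simplifying the conditional variance) is routine.
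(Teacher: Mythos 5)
Your proposal is correct, and it reaches the stated mean and covariance by a route that differs from the paper's in its justification, though the final algebra coincides. The paper derives the posterior from Nelson's duality, writing $q(Y_t|X_0,Y_1)=\Psi(Y_t,t|X_0)\hat\Psi(Y_t,t|Y_1)$ and identifying each potential with a Gaussian heat kernel of variance $\sigma_t^2$ resp.\ $\hat\sigma_t^2$, so the result falls out as a product of two Gaussians; it then verifies the ``no recursion needed'' claim by an explicit induction on the stepwise posteriors $p(Y_n|X_0,Y_{n+1})$, checking the mean and variance recursions with the Gaussian additive property and the identity $\alpha_n^2=\sigma_{n+1}^2-\sigma_n^2=\hat\sigma_n^2-\hat\sigma_{n+1}^2$. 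You instead treat the driftless forward SDE as a time-changed Wiener process and condition the bivariate Gaussian $(B_{\sigma_t^2},B_{\sigma_1^2})$ on its endpoint --- the classical pinned-diffusion computation --- which is mathematically the same product-of-kernels identity in Bayes-rule form ($p(Y_t|Y_0,Y_1)\propto p(Y_t|Y_0)\,p(Y_1|Y_t)$ by the Markov property), but requires no appeal to the SB potentials. Your approach is arguably more elementary and makes transparent why $\Sigma_t$ is independent of the endpoints; the paper's approach buys the explicit verification that the closed form is consistent with the discrete ancestral posterior $p(Y_n|X_0,Y_{n+1})$ used in the sampling algorithm, which you dispatch only by invoking Chapman--Kolmogorov. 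That invocation is correct in spirit but is the one place where you should supply the same one-step marginalization computation the paper carries out (or at least note that each transition kernel is itself a Gaussian bridge posterior with the matching parameters), since the sampling scheme in Algorithm 2 relies on exactly that consistency. Your closing remarks on positivity of $\sigma_t^2,\hat\sigma_t^2$ and invertibility of the time change are a welcome precision the paper glosses over.
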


\begin{proof}
	Since we have $q(\cdot, t) = \Psi(\cdot, t)\hat\Psi(\cdot, t)$ contributed by Nelson's duality~\cite{nelson2020dynamical}, one can write,
	\begin{equation}
		q(Y_t|X_0,Y_1) = \Psi(Y_t, t|X_0)\hat\Psi(Y_t, t|Y_1).
	\end{equation}
	Due to fact that the solution of FP equations are actually $\Psi(Y_t, t|X_0)$ and $\hat\Psi(Y_t, t|Y_1)$, the posterior are two Guassian potential product:
	\begin{subequations}
		\begin{align}
			&\Psi(Y_t, t|X_0)\hat\Psi(Y_t, t|Y_1)\\
			&=\exp\left(-\frac 1 2 \left( \frac{\|Y_t-Y_0\|^2}{\sigma_t^2}+\frac{\|Y_t-Y_1\|^2}{\hat \sigma_t^2}\right)\right)\\
			&=\mathcal N(Y_t; \frac{\hat\sigma_t^2}{\hat \sigma_t^2+\sigma_t^2} X_0+\frac{\sigma_t^2}{\hat \sigma_t^2+\sigma_t^2} Y_1,\frac{\sigma_t^2\hat\sigma_t^2}{\hat\sigma_t^2+\sigma_t^2}\mathbf I),
		\end{align}
	\end{subequations}
	where $\sigma_t^2:=\int_0^t \beta_\tau d_\tau$ and $\hat \sigma_t^2:=\int_t^1 \beta_\tau d\tau$ are variances of Gaussians. Since the SDE should be discrelized, by slight abuse of notation as $Y_{t_n} = Y_n$, we prove $q(Y_t|X_0,Y_0)$ is the marginal probability density of the posterior $p(Y_n|X_0, Y_{n-1})$.
	When $f_t:=0$, $p(Y_n|X_0, Y_{n-1})$ has an analytic Gaussian form,
	\begin{subequations}
		\begin{align}
			&p(Y_n|X_0, Y_{n+1}) \label{eq: sample} \\
			=\mathcal N (Y_n; \frac{\alpha_n^2}{\alpha_n^2+\sigma_n^2}X_0& + \frac{\sigma_n^2}{\alpha_n^2+\sigma_n^2}Y_{n+1},\frac{\alpha_n^2\sigma_n^2}{\alpha_n^2+\sigma_n^2}\mathbf I),
		\end{align}
	\end{subequations}
	where $\alpha_n^2:=\int_{t_n}^{t_{n+1}} \beta_\tau d\tau$ is the accumulated Gassian variance. So, we can derive that,
	\begin{equation}
		q(Y_{N-1}|X_0, Y_{N}) = p(Y_{N-1}|X_0, Y_{N}),
	\end{equation}
	because one fact exists that $\alpha_{N-1}=\int_{t_{N-1}}^{t_N} \beta_\tau d\tau =\hat \sigma_{N-1}^2$.
	We can deduce at any timestep $t_n$, the relation still holds by induction:
	\begin{equation}
		\begin{aligned}
			q(Y_n|X_0,Y_N)=\int p(Y_n|X_0,Y_{n+1})q(Y_{n+1}|X_0,Y_{n+1}) dY_{n}
		\end{aligned}
	\end{equation}
	The RHS is a Gaussian that can be derived by using the Gaussian additive property:
	\begin{subequations}
		\begin{align}
			&\frac{\alpha_n^2}{\alpha_n^2+\sigma_n^2}X_0+\frac{\sigma_n^2}{\alpha_n^2+\sigma_n^2}\left(
			\frac{\hat{\sigma}_{n+1}^2}{\hat \sigma_{n+1}^2+\sigma_t^2}X_0
			+ \frac{\sigma_{n+1}^2}{\hat \sigma_{n+1}^2+\sigma_{n+1}^2}Y_N
			\right)\\
			&=\frac{\alpha_{n}^{2}(\hat{\sigma}_{n+1}^{2}+\sigma_{n+1}^{2})+\sigma_{n}^{2}\hat{\sigma}_{n+1}^{2}}{\sigma_{n+1}^{2}(\hat{\sigma}_{n}^{2}+\sigma_{n}^{2})}X_{0}+\frac{\sigma_{n}^{2}}{\hat{\sigma}_{n}^{2}+\sigma_{n}^{2}}Y_{N}  \\
			&= \frac{\alpha_{n}^{2}\sigma_{n+1}^{2}+\hat{\sigma}_{n+1}^{2}(\alpha_{n}^{2}+\sigma_{n}^{2})}{\sigma_{n+1}^{2}(\hat{\sigma}_{n}^{2}+\sigma_{n}^{2})}X_{0}+\frac{\sigma_{n}^{2}}{\hat{\sigma}_{n}^{2}+\sigma_{n}^{2}}Y_{N}  \\
			&= \frac{\hat{\sigma}_{n}^{2}}{\hat{\sigma}_{n}^{2}+\sigma_{n}^{2}}X_{0}+\frac{\sigma_{n}^{2}}{\hat{\sigma}_{n}^{2}+\sigma_{n}^{2}}Y_{N},
		\end{align}
	\end{subequations}
	By utilizing $\hat \sigma_n^2+\sigma_n^2$ is a const and
	\begin{align}
		\alpha_t=\sigma_{n+1}^2-\sigma_n^2=\hat \sigma_{n}^2 - \hat \sigma_{n+1}^2.
	\end{align}
	The variance $\Sigma_t$ of RHS is:
	\begin{subequations}
		\begin{align}
			&\frac{\alpha_{n}^{2}\sigma_{n}^{2}}{\alpha_{n}^{2}+\sigma_{n}^{2}}+\frac{\hat{\sigma}_{n+1}^{2}\sigma_{n+1}^{2}}{\hat{\sigma}_{n+1}^{2}+\sigma_{n+1}^{2}}\biggl(\frac{\sigma_{n}^{2}}{\alpha_{n}^{2}+\sigma_{n}^{2}}\biggr)^{2} \\
			&= \frac{\alpha_{n}^{2}\sigma_{n}^{2}(\hat{\sigma}_{n}^{2}+\sigma_{n}^{2})+\hat{\sigma}_{n+1}^{2}\sigma_{n}^{4}}{\sigma_{n+1}^{2}(\hat{\sigma}_{n}^{2}+\sigma_{n}^{2})}  \\
			&=\frac{\sigma_{n}^{2}\hat{\sigma}_{n}^{2}}{\hat{\sigma}_{n}^{2}+\sigma_{n}^{2}}.
		\end{align}
	\end{subequations}
	Thus, the proof is concluded.
\end{proof}
\noindent In practice, we set $\beta_t$ in the interval $[0, \frac 1 2]$ to be the quadratic function and flip it in $(\frac 1 2, 1]$,
\begin{align}
	\beta_t=
	\begin{cases}
		\left((\sqrt{\beta_{1/2}} - \sqrt{\beta_0}) t + \sqrt{\beta_0}\right)^2, &t\in [0, \frac 1 2]\\
		\left((\sqrt{\beta_{1/2}} - \sqrt{\beta_0})(1-t) + \sqrt{\beta_0} \right)^2, &t\in (\frac 1 2, 1]
	\end{cases}
\end{align}
where $\beta_0$ and $\beta_{1/2}$ are prefixed. Other SB hyperparameters are shown in Fig.~\ref{fig: schedule}.

Proposition~\ref{prop:analtic-posterior} provides an explicit bridge process of SB SDE. Similarly, we can write the analytic posterior of $v_t$ in Corollary~\ref{cor:OT-ODE}.
\begin{corollary}[Analytic posterior of SB ODE] \label{cor: SB ODE}
	The posterior of SB ODE in Corollary~\ref{cor:OT-ODE} has a closed-form posterior:
	\begin{equation}
		v_t(Y_t|X_0) = \frac {\beta_t} {\sigma_t^2}(Y_t-X_0),\label{eq:ot-ode}
	\end{equation}
	and the SB ODE can be discretized as,
	\begin{equation}
		Y_t=\frac{\hat\sigma_t^2}{\hat \sigma_t^2+\sigma_t^2} X_0+\frac{\sigma_t^2}{\hat \sigma_t^2+\sigma_t^2} Y_1. \label{eq: Y_t-in-ODE}
	\end{equation}
\end{corollary}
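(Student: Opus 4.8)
\textbf{Proof proposal for Corollary~\ref{cor: SB ODE}.}

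The plan is to obtain the ODE velocity field $v_t(Y_t|X_0)$ by taking the vanishing-diffusion limit $g_t\to 0$ of the SB SDE from Theorem~\ref{theorem:sb-sde}, using the analytic posterior already established in Proposition~\ref{prop:analtic-posterior}, and then to check that the resulting ODE trajectory integrates exactly to the interpolation formula~\eqref{eq: Y_t-in-ODE}. First I would recall that, conditioned on the endpoint $X_0$ (and forgetting $Y_1$, which is only sampled at the very start of the reverse process), the forward bridge marginal is Gaussian with mean $\mu_{t|X_0}=X_0$ contributions plus the drift toward $Y_1$; more precisely, from Proposition~\ref{prop:analtic-posterior} the one-sided conditional $q(Y_t\mid X_0)$ has mean that moves away from $X_0$ at a rate governed by $\sigma_t^2=\int_0^t\beta_\tau\,d\tau$. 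The key computation is to write the probability-flow-type ODE associated with this Gaussian family: for a Gaussian path $Y_t = X_0 + \sigma_t\,\epsilon$ with fixed $\epsilon$, differentiating in $t$ gives $\dot Y_t = \dot\sigma_t\,\epsilon = \frac{\dot\sigma_t}{\sigma_t}(Y_t-X_0)$, and since $\dot\sigma_t^2 = \beta_t$ we have $2\sigma_t\dot\sigma_t=\beta_t$, i.e. $\dot\sigma_t/\sigma_t = \beta_t/(2\sigma_t^2)$. I would then reconcile the factor of $2$ with the stated $v_t = (\beta_t/\sigma_t^2)(Y_t-X_0)$ by being careful about whether the relevant variance along the conditional-on-$X_0$ characteristic is $\sigma_t^2$ or $\sigma_t^2\hat\sigma_t^2/(\hat\sigma_t^2+\sigma_t^2)$; the Corollary's normalization follows once one accounts for the $\tfrac12 g_t^2\nabla\log p_t$ score term collapsing correctly as $g_t\to0$, so that the deterministic part picks up the full $\beta_t/\sigma_t^2$ rather than half of it.

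Next I would verify the discretized/integrated form~\eqref{eq: Y_t-in-ODE}. The clean way is to observe that the two-sided bridge mean in Proposition~\ref{prop:analtic-posterior}, namely $\mu_t = \frac{\hat\sigma_t^2}{\hat\sigma_t^2+\sigma_t^2}X_0 + \frac{\sigma_t^2}{\hat\sigma_t^2+\sigma_t^2}Y_1$, is exactly the deterministic trajectory of the ODE when it is initialized at $Y_1$ at time $t=1$ and run in reverse with the sign-flipped field $dY_s = -v_t(Y_s\mid Y_t)ds$; I would check this by direct substitution, showing that $Y_t$ given by~\eqref{eq: Y_t-in-ODE} satisfies $\dot Y_t = \frac{\beta_t}{\sigma_t^2}(Y_t - X_0)$ using $\frac{d}{dt}\sigma_t^2 = \beta_t$, $\frac{d}{dt}\hat\sigma_t^2 = -\beta_t$, and $\sigma_t^2+\hat\sigma_t^2 = \sigma_1^2$ constant. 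Differentiating $Y_t = \frac{\hat\sigma_t^2 X_0 + \sigma_t^2 Y_1}{\sigma_1^2}$ gives $\dot Y_t = \frac{-\beta_t X_0 + \beta_t Y_1}{\sigma_1^2} = \frac{\beta_t}{\sigma_1^2}(Y_1-X_0)$, and one checks this equals $\frac{\beta_t}{\sigma_t^2}(Y_t-X_0) = \frac{\beta_t}{\sigma_t^2}\cdot\frac{\sigma_t^2(Y_1-X_0)}{\sigma_1^2}$ — which it does. So the interpolation formula and the ODE are consistent, and the OT property is inherited from the fact that the characteristics are straight lines in the endpoint coordinates $(X_0,Y_1)$, matching the displacement-interpolation structure of the quadratic-cost transport mentioned after Corollary~\ref{cor:OT-ODE}.

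Finally I would package this: state that $q(Y_t\mid X_0)$ from Proposition~\ref{prop:analtic-posterior} has the Gaussian form whose mean/variance characteristic yields the continuity equation $\partial_t p_t + \nabla\cdot(p_t v_t)=0$ with $v_t$ as in~\eqref{eq:ot-ode}, invoke Corollary~\ref{cor:OT-ODE} for the $g_t\to0$ limit to identify this $v_t$ with the SB-ODE velocity, and then cite the substitution check above for the discretization~\eqref{eq: Y_t-in-ODE}.

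\textbf{Main obstacle.} The delicate point is the bookkeeping of the factor of $\tfrac12$ and of which variance ($\sigma_t^2$, $\hat\sigma_t^2$, or their harmonic combination) governs the ODE characteristic. The SB SDE in Theorem~\ref{theorem:sb-sde} has drift $f_t=0$ and a score term, while the probability-flow ODE replaces $g_t^2\nabla\log p_t$ by $\tfrac12 g_t^2\nabla\log p_t$; one must show that in the present conditioned construction the surviving deterministic velocity is the full $\beta_t/\sigma_t^2\,(Y_t-X_0)$, not $\tfrac12\beta_t/\sigma_t^2\,(Y_t-X_0)$, which hinges on carefully distinguishing the role of $X_0$-conditioning (which fixes one Gaussian factor $\Psi$) from the full marginal. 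Getting this normalization exactly right — and confirming it against the self-consistent substitution check in~\eqref{eq: Y_t-in-ODE} — is where the real care is needed; the rest is routine differentiation.
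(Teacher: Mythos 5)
Your proposal reaches the correct statement and follows the same basic route as the paper — take $g_t^2=\beta_t\to 0$ in the analytic posterior of Proposition~\ref{prop:analtic-posterior}, observe that the variance $\sigma_t^2\hat\sigma_t^2/(\hat\sigma_t^2+\sigma_t^2)$ vanishes while the mean is unchanged, and read off the velocity — but the two differ in where the work is done. The paper simply identifies $v_t$ as the \emph{full} bridge drift $g_t^2\nabla_{Y_t}\log\hat\Psi(Y_t|X_0)=\beta_t(Y_t-X_0)/\sigma_t^2$ inherited from Eq.~\eqref{pan-sb-f}; since the SB ODE of Corollary~\ref{cor:OT-ODE} is explicitly \emph{not} a probability-flow ODE, no factor of $\tfrac12$ is ever introduced, which dissolves the ``main obstacle'' you flag. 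Your attempted derivation via the one-sided characteristic $Y_t=X_0+\sigma_t\epsilon$ computes the PF-ODE velocity of a plain diffusion (hence the spurious $\beta_t/(2\sigma_t^2)$) and is the wrong object here; you should drop it rather than try to reconcile it. What saves your write-up — and is actually sharper than the paper's rather terse ``we define $v_t$ with the nonlinear term multiplied by $\beta_t$'' — is your direct substitution check: differentiating \eqref{eq: Y_t-in-ODE} using $\frac{d}{dt}\sigma_t^2=\beta_t$, $\frac{d}{dt}\hat\sigma_t^2=-\beta_t$ and $\sigma_t^2+\hat\sigma_t^2=\mathrm{const}$ gives $\dot Y_t=\beta_t(Y_1-X_0)/(\sigma_t^2+\hat\sigma_t^2)=\beta_t(Y_t-X_0)/\sigma_t^2$, which verifies \eqref{eq:ot-ode} with the full (unhalved) coefficient. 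Presented in that order — the zero-noise limit yields the deterministic interpolation \eqref{eq: Y_t-in-ODE}, then differentiation identifies the velocity — your argument is complete and self-contained, and the straight-line characteristics in $(X_0,Y_1)$ give the OT property exactly as you note.
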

\begin{proof}
	This proof start to take the infinitestimal limit of $g_t^2:=\beta_t\to 0$ in Eq.~\eqref{pan-sb-f2}, and the variance of $q_t$ (\textit{i.e.,} $\frac{\alpha_n^2\sigma_n^2}{\alpha_n^2+\sigma_n^2}$) also converge to 0 as the nominator becomes to 0 faster. The process become deterministic in Eq.~\eqref{eq: Y_t-in-ODE} since the stochastic term is vanished. The mean $u_t$ is unchanged. From Eq.~\eqref{eq: mean and variance}, we can know the nonlinear term $\nabla_{Y_t} \log \hat \Psi(Y_t|X_0)$ absorbed in the drift term $f_t$ becomes,
	\begin{align}
		\nabla_{Y_t} \log \hat \Psi(Y_t|X_0) = \frac 1 {\sigma_t^2}(Y_t-X_0),
	\end{align}
	and we define $v_t$ with the nonlinear term multiplied by $\beta_t$,
	which is Eq.~\eqref{eq:ot-ode}.
\end{proof}

Clearly, from Eq.~\eqref{eq:ot-ode}, it can be seen that the ODE drift $v_t$ interpolates between $Y_1$ and $X_0$ We explicitly perform this interpolation process during the forward pass.
\begin{remark} 
	``SB ODE'' corresponds to the ODE versions of Eqs.~\eqref{pan-sb-f2} and \eqref{pan-sb-b2}. Its rationale lies in removing the stochastic terms from the SDE, transforming it into an ODE. Additionally, a favorable OT property~\cite{peyre2019computational,villani2009optimal} can be derived from Eq.~\eqref{eq:ot-ode}, making sampling extremely convenient and practical (or, in other words, ensuring that the curvature is flat~\cite{lee2023minimizing}).
\end{remark}
Moreover, when $\beta_t$ becomes a constant, SB ODE can have the same forward and backward process as Flow Matching~\cite{lipman2023flow}, which is detailed in discussion.

\begin{figure}[t]
	\centering
	\begin{subfigure}[t]{0.5\linewidth}
		\centering
		\includegraphics[width=\linewidth]{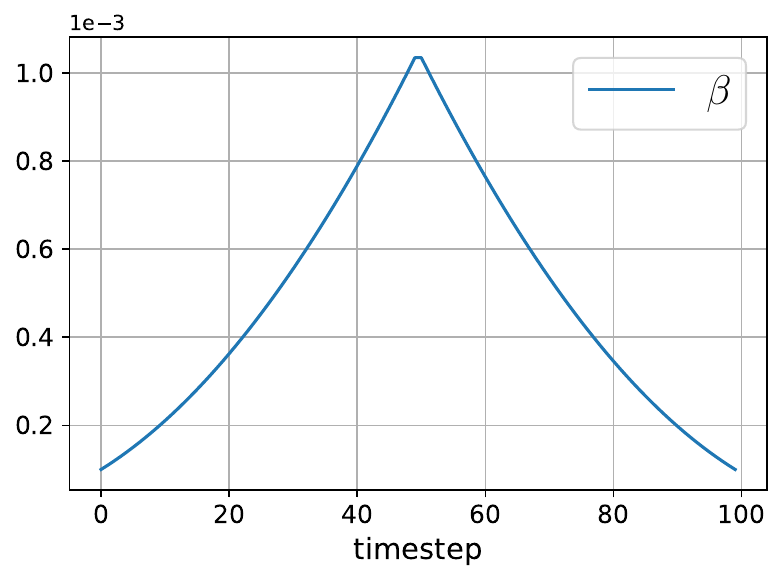}
		\caption{$\beta_t$ schedule.}
	\end{subfigure}%
	\begin{subfigure}[t]{0.5\linewidth}
		\centering
		\includegraphics[width=\linewidth]{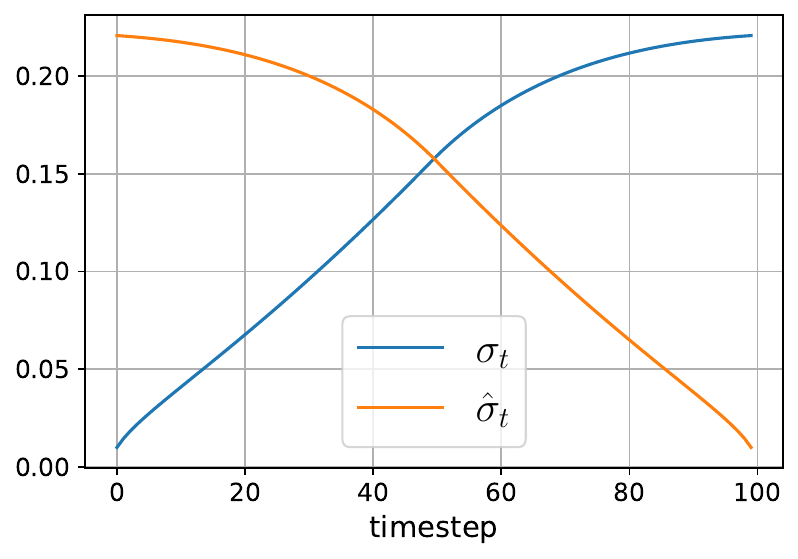}
		\captionsetup{width=\textwidth}
		\caption{\centering Standard deviations.}
	\end{subfigure}
	\begin{subfigure}[t]{0.5\linewidth}
		\centering
		\includegraphics[width=\linewidth]{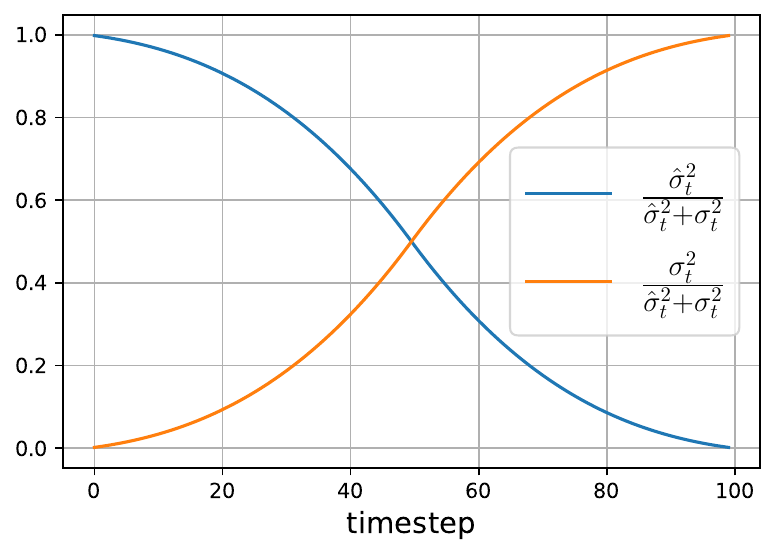}
		\caption{Mean schedule.}
	\end{subfigure}%
	\begin{subfigure}[t]{0.5\linewidth}
		\centering
		\includegraphics[width=\linewidth]{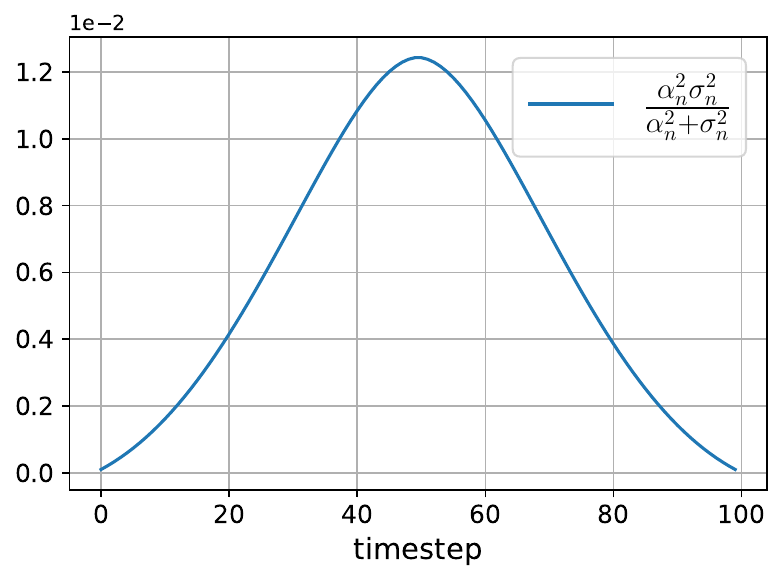}
		\captionsetup{width=\textwidth}
		\caption{\centering Variance schedule.}
		\label{fig:variance}
	\end{subfigure}
	
	\caption{$\beta_t$, $\mu_t$, $\sigma_t$ and variance schedules of the proposed SB matching.}
	\label{fig: schedule}
\end{figure}

To reverse back the SB forward process $q$ (\textit{i.e.}, $p$), we need to learn a neural network $s_\theta$ trained on a specific objective.
We provide four different learning objectives for SB SDE and ODE:
\begin{enumerate}
	\item[(a)] bridge endpoint $X_0$;
	\item[(b)] bridge length $Y_1-X_0$;
	\item[(c)] bridge posterior length $Y_t-X_0$;
	\item[(d)] bridge endpoint with score $[X_0, \epsilon]$ (SDE only).
\end{enumerate}
The first three objectives are suitable for both SB SDE and ODE but (d) only suits SB SDE as ODE does not involve a score function. To explain the rationale behind each objective, (a) directly predicts one bridge point which is similar to supervised learning, but the latter does not have a concept of timestep or dynamic; (b) lets the network learn the overall bridge length; while (c) straight-forwardly train the network by using the definition of $v_t$ in Eq.~\eqref{eq:ot-ode}; (d) incorporate score function in objective (a) which is related with the added Gaussian noise in $q$ process~\cite{somnath2023aligned} and we simplify it to $\epsilon$ inspired by \cite{ho2020denoising}.

\section{Bridge Matching Neural Architecture} \label{sect:bm-nn}

In the above sections, we elucidated the design paradigm, parameterization, training sampling strategy, and objectives of SB SDEs. However, we have not yet discussed how to design the network. In fact, the design of the network is a crucial aspect of SDE learning, and recent literature on score-based models and DPMs has also explored this topic~\cite{karras2023analyzing,peebles2023scalable,ma2024sit,crowson2024scalable}.
We design an efficient network architecture specifically for the pansharpening task, featuring high-speed training and inference. In comparison to architectures designed for image generation in DPMs (\textit{e.g.}, U-Net), our designed network exhibits superior performance.
Inspired by NAFNet~\cite{chen2022simple} in image restoration and Metaformer~\cite{yu2022metaformer} in architecture designing, we devise a NAFNet-like network named SBM-Net tailored for SB SDE learning.
\subsection{Overall Architecture}
\begin{figure}[t]
	\centering
    \includegraphics[width=0.7\linewidth]{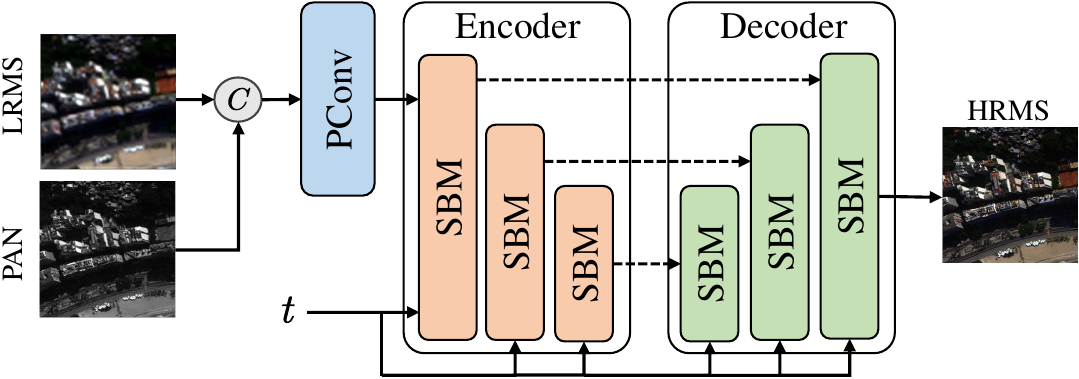}
		\caption{Overall architecture of the proposed SBM-Net for SB SDE learning. The SBM-Net is an encoder-decoder architecture. Timestep $t$ is injected into every SBM blocks. \textcircled{\footnotesize \raisebox{-.5pt}C} denotes tensor concatenation. Dash lines represent the U-Net-like shortcut connection. Valid lines are data flows.}
        \label{fig:arch}
\end{figure}

\begin{figure}[ht]
	\centering
	\includegraphics[width=0.6\linewidth]{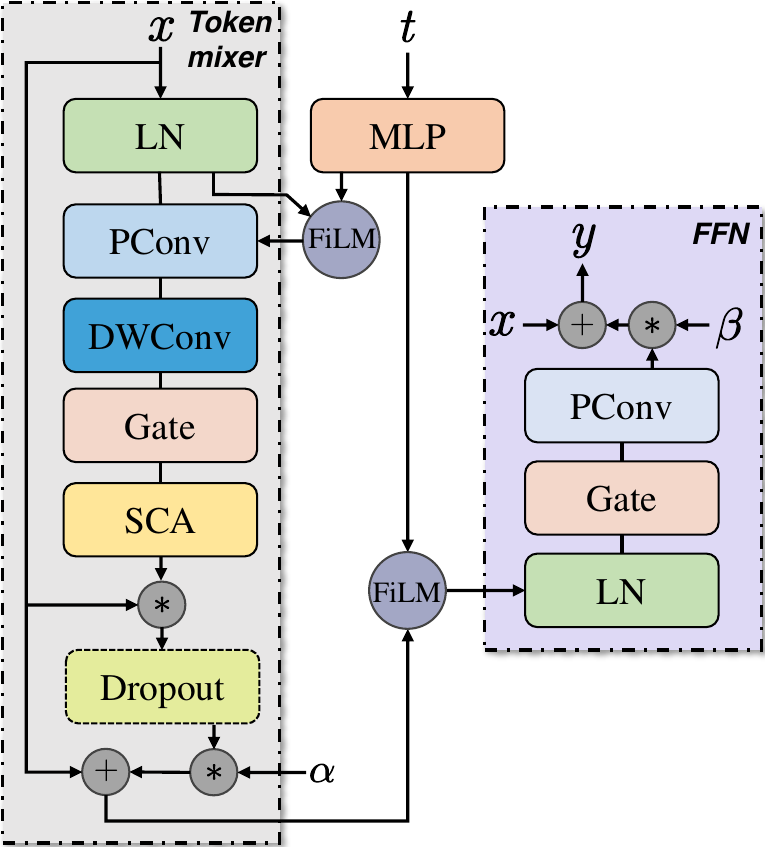}
	\caption{Illustration of the proposed efficient SBM block. SBM block shares a similar design scheme to the Metaformer block, composed of a token mixer and an FFN.}
	\label{fig:sbm-block}
\end{figure}

As shown in Fig.~\ref{fig:arch}, SBM-Net is an encoder-decoder architecture composed of several SBM blocks. The inputs LRMS and PAN are first concatenated along the channel dimension and then projected by a point-wise convolution into the latent space. Further, the projected feature is fed into the encoder layer by layer. Each of the SBM layers consists of several SBM blocks to extract spatial and channel representations. To incorporate the dependence of $t$, the timestep is embedded in the form of Sine-cosine~\cite{dosovitskiy2021an} to feed into the block. After processing every layer, the feature is downsampled, and doubled the channel dimensions. Then, the feature after the encoder is input to the decoder. The decoder has a similar architecture to the encoder. Every decoder's input is the input feature from the last SBM block and the feature processed by the corresponding encoder layer. We simply concatenate them together to feed into the decoder SBM layer. Similarly, the timestep is also fed into the decoder layer. After every decoder layer, the feature dimension is halved and upsampled by the pixel-shuffle operation~\cite{shi2016real}.
At last, another point-wise convolution is employed to project the feature back to the pixel space, forming the HRMS. In practice, we observed that forwarding the network twice can yield additional performance gains. Specifically, for each batch size of the input, we feed it into the network twice in a loop (i.e., the output of the first forward is fed back as input for the next forward). Note that objective (d) needs an additional predictive $\epsilon$, we double the number of output channels.

\subsection{SBM Block}
In the field of image generation, attention layers are often incorporated into U-Net architectures to facilitate additional inter-modal communication~\cite{saharia2022photorealistic,rombach2021highresolution} (\textit{e.g.}, between text and mask). In contrast, pansharpening, as a low-level image fusion technique in the domain of image fusion, lacks abstract high-level information such as text. Moreover, the quadratic memory overhead introduced by attention becomes unacceptable when dealing with high-resolution images under resource-constrained conditions.
Due to the conditioning on $t$ of $\nabla_{Y_t}\log p_t(Y_t)$, it is important to input timestep $t$ into the model. We feed $t$ into an MLP and adopt FiLM~\cite{perez2018film} to modulate feature $x$, formulated as,
\begin{subequations}
	\begin{align}
		&a,b=MLP(t)\\
		y&=(1+a)\times x + b
	\end{align}
\end{subequations}

Specifically, we remove the commonly-used Attention layer and replace it with a Simple Channel Attention (CSA), which can formulated as,
\begin{align}
	y = PConv(AdapPool(x)),
\end{align}
where $PConv$ denotes point-wise convolution and $AdapPool$ is adaptive pooling the input $x$ into $1\times 1\times c$ tensor. In order to reduce the number of parameters, we utilize a combination of point-wise convolution and depth-wise convoltuion~\cite{howard2017mobilenets} to replace the normal convolution. The pre-norm~\cite{nguyen2019transformers} scheme is adopted to stabilize the training. According to GELU~\cite{hendrycks2016gaussian} formulated, a simple gate is employed to replace the GELU activation function which is more computationally efficient and can be expressed as,
\begin{align}
	y = x_{[:c/2]} \times x_{[c/2:]},
\end{align}
where the input $x$ has the shape of $H \times W \times c$ ($c$ is always divisible by 2.). To facilitate the adapbility of the SBM block, we introduce another two learnable parameters $\alpha, \beta$ to control the block feature $z$ and shortcut feature $x$:
\begin{subequations}
	\begin{align}
		y=x+ \alpha \times z,\\
		y=x+ \beta \times z.
	\end{align}
\end{subequations}
The overall illustration of the SBM block is shown in Fig.~\ref{fig:sbm-block}.

\section{Experiments}
\label{sect:exp}

In this section, we first validate the designed training and sampling algorithms on a toy example, revealing some properties of SB SDEs and ODEs. Then we elaborate on the pansharpening experimental settings, datasets, and benchmarking. Performances of different traditional methods, regressive methods, and diffusion (or score)-based methods are compared on pansharpening reduced and full assessments. At last, some ablation studies are included to further elucidate some possible factors that can affect the proposed SB SDE learning.

\subsection{Toy Example}

We first examine the proposed SB SDE on a toy dataset. We set an 8 Gaussian mixture distribution as the SB start point $P_{X}$ and a Swiss roll distribution as the SB endpoint $P_Y$. The learned SB can translate the Gaussian mixture to the Swiss Roll as shown in Fig.~\ref{fig: toy-example}. The proposed SB SDE can \textit{progressively} transport the 8 Gaussian mixture to the Swiss Roll distribution, which verifies the efficacy.

\begin{figure}[htbp]
	\includegraphics[width=\linewidth]{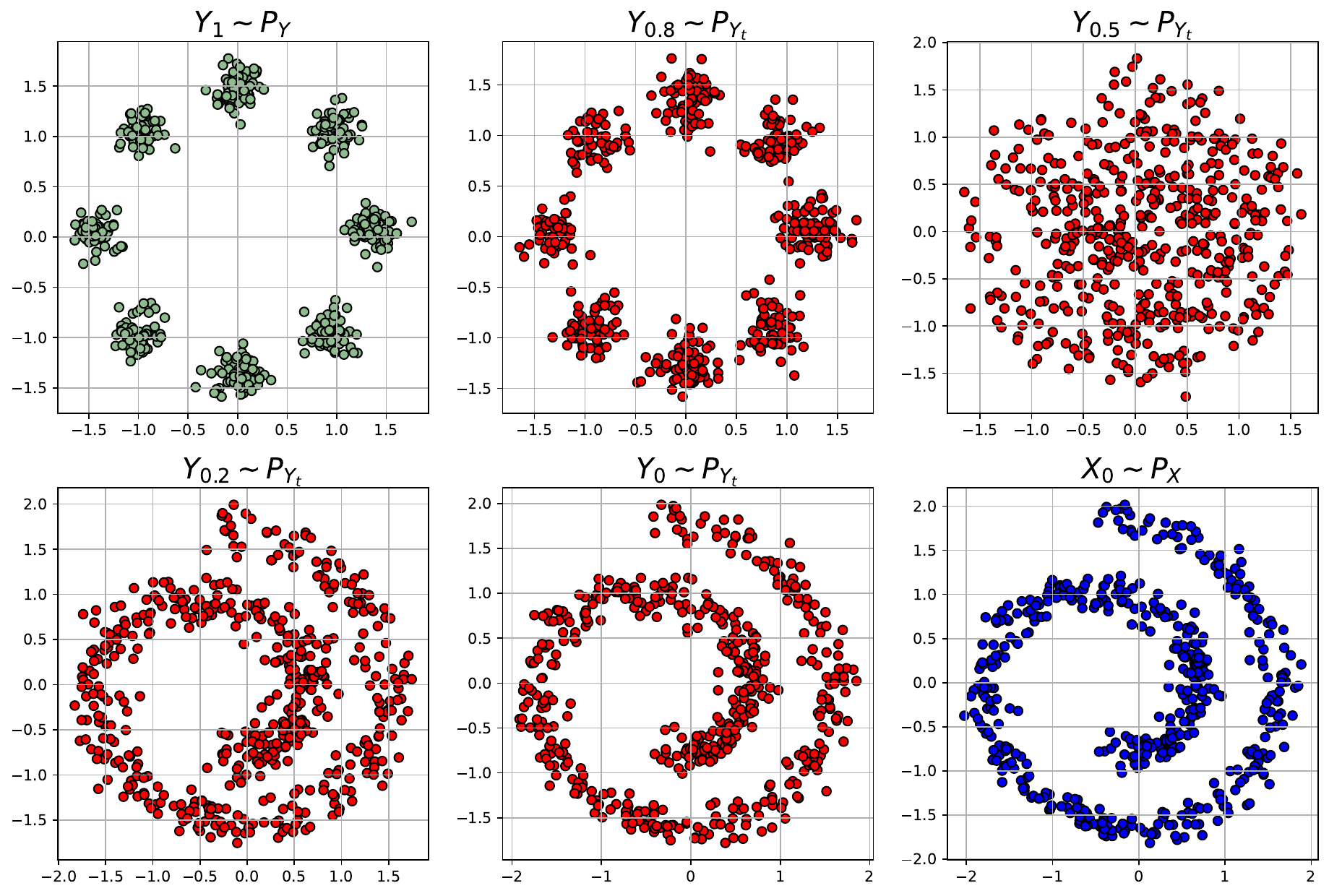}
	\caption{2D toy example: 8 Gaussian Mixture translates to Swiss Roll.}
	\label{fig: toy-example}
\end{figure}

\subsection{Experiment Setup \& Implementation Details}

In the following experiment, we conduct pansharpening tasks using SB SDE and SB ODE and compare them with PanDiff~\cite{pandiff}, which is based on vanilla DPM~\cite{ho2020denoising} as the baseline. 
We leverage the algorithm~\ref{algo:train} and \ref{algo:eval} to train and sample from the dataset. $p_X$ is the GT (\textit{i.e.,} HRMS), $p_Y$ is the LRMS, and $p_c$ is the PAN as the condition for the proposed SDE and ODE.
During the sampling phase, we employ \textit{5-step} Euler SDE sampling and \textit{1-step} ODE sampling for all datasets, in order to compare the sampling performance of other DPM methods. We will discuss the advantages and disadvantages of SB SDE and ODE and some of their features.


Regarding the network architecture proposed in Sect.~\ref{sect:bm-nn}, we set the number of channels for the first SBM layer to 32, and each SBM layer contains 3 SBM blocks. In the encoder, we use the $3\times 3$ convolution for downsampling and apply pixel-shuffle~\cite{shi2016real} for upsampling in the decoder. The input LRMS and PAN images are both normalized to the range of 0 to 1.

\newcommand{\best}[1]{{\color{red} \textbf{#1}}}
\newcommand{\second}[1]{{\color{blue} \textbf{#1}}}
\begin{table}[!ht]
	\centering
	\setlength{\tabcolsep}{3pt}
	\renewcommand\arraystretch{1.05}
	\caption{Quantitative results of all competing methods. The Q2n index is Q8/Q4 for 8-band/4-band data. The best results are in \best{red} and the second best results are in \second{blue}.}
	\label{tab:pansharpening}
	\resizebox{\linewidth}{!}{
		\begin{tabular}{clcccc|ccc}
			\toprule
			\hline
			\multicolumn{2}{l}{\multirow{2}{*}{}} &
			\multicolumn{4}{c|}{Reduced Resolution (RR): Avg$\pm$std} &
			\multicolumn{3}{c}{Full Resolution (FR): Avg$\pm$std} \\ \cline{3-9}
			\multicolumn{2}{l}{} &
			SAM & 
			ERGAS & 
			Q2n & 
			SCC & 
			$D_\lambda$ & 
			$D_s$ & 
			HQNR \\ 
			\multirow{16}{*}{\begin{tabular}[c]{@{}c@{}}
					WorldView-3 \\ (WV3, 8-band)
			\end{tabular}} &
			BDSD-PC~\cite{bdsd-pc} &
			5.47$\pm$1.72 &
			4.65$\pm$1.47 &
			0.812$\pm$0.106 &
			0.905$\pm$0.042 &
			0.063$\pm$0.024 &
			0.073$\pm$0.036 &
			0.870$\pm$0.053 \\
			&
			MTF-GLP-FS~\cite{mtf-glp-fs} &
			5.32$\pm$1.65 &
			4.65$\pm$1.44 &
			0.818$\pm$0.101 &
			0.898$\pm$0.047 &
			0.021$\pm$0.008 &
			0.063$\pm$0.028 &
			0.918$\pm$0.035 \\
			&
			BT-H~\cite{BT-H} &
			4.90$\pm$1.30 &
			4.52$\pm$1.33 &
			0.818$\pm$0.102 &
			0.924$\pm$0.024 &
			0.057$\pm$0.023 &
			0.081$\pm$0.037 &
			0.867$\pm$0.054 \\
			& 
			LRTCFPan~\cite{LRTCFPan} & 4.74$\pm$1.41& 4.32$\pm$1.44& 0.846$\pm$0.091 & 0.927$\pm$0.023 & {0.018$\pm$0.007}& 0.053$\pm$0.026& 0.931$\pm$0.031 \\
			\cline{2-9}
			&
			PNN~\cite{pnn} &
			3.68$\pm$0.76 &
			2.68$\pm$0.65 &
			0.893$\pm$0.092 &
			0.976$\pm$0.008 &
			0.021$\pm$0.008 &
			0.043$\pm$0.015 &
			0.937$\pm$0.021 \\
			&
			PanNet~\cite{pannet} &
			3.62$\pm$0.77 &
			2.67$\pm$0.69 &
			0.891$\pm$0.093 &
			0.976$\pm$0.009 &
			0.017$\pm$0.007 &
			0.047$\pm$0.021 &
			0.937$\pm$0.027 \\
			&
			MSDCNN~\cite{msdcnn} &
			3.78$\pm$0.80 &
			2.76$\pm$0.69 &
			0.890$\pm$0.090 &
			0.974$\pm$0.008 &
			0.023$\pm$0.009 &
			0.047$\pm$0.020 &
			0.932$\pm$0.027 \\
			&
			DiCNN~\cite{dicnn} &
			3.59$\pm$0.76 &
			2.67$\pm$0.66 &
			0.900$\pm$0.087 &
			0.976$\pm$0.007 &
			0.036$\pm$0.011 &
			0.046$\pm$0.018 &
			0.920$\pm$0.026 \\
			&
			FusionNet~\cite{fusionnet} &
			3.33$\pm$0.70 &
			2.47$\pm$0.64 &
			0.904$\pm$0.090 &
			0.981$\pm$0.007 &
			0.024$\pm$0.009 &
			{0.036$\pm$0.014} &
			{0.941$\pm$0.020} \\
			&
			LAGConv~\cite{lagconv} &
			3.10$\pm$0.56 &
			2.30$\pm$0.61 &
			0.910$\pm$0.091 &
			0.984$\pm$0.007 &
			0.037$\pm$0.015 &
			0.042$\pm$0.015 &
			0.923$\pm$0.025 \\
			&
			Invformer~\cite{panformer} & 3.25$\pm$0.64 & 2.39$\pm$0.52 & 0.906$\pm$0.084 & 0.983$\pm$0.005 & 0.055$\pm$0.029 & 0.068$\pm$0.031 & 0.882$\pm$0.049 \\
			&
			DCFNet~\cite{dcfnet} &
			{3.03$\pm$0.74} &
			{2.16$\pm$0.46} &
			0.905$\pm$0.088 &
			{0.986$\pm$0.004} &
			0.078$\pm$0.081 &
			0.051$\pm$0.034 &
			0.877$\pm$0.101 \\
			&
			HMPNet~\cite{hmpnet} & 3.06$\pm$0.58 & 2.23$\pm$0.55 & {0.916$\pm$0.087} & 0.986$\pm$0.005 & 0.018$\pm$0.007 & 0.053$\pm$0.006 & 0.929$\pm$0.011 \\
			\cline{2-9}
			&
			PanDiff~\cite{pandiff}
			& 3.30$\pm$0.60
			& 2.47$\pm$0.58
			& 0.898$\pm$0.088
			& 0.980$\pm$0.006
			& 0.027$\pm$0.012
			& 0.054$\pm$0.026
			& 0.920$\pm$0.036
			\\
			& 
			DDIF~\cite{DDIF}
			& \second{2.74$\pm$0.51} 
			& \second{2.02$\pm$0.45} 
			& \second{0.920$\pm$0.082}
			& \second{0.988$\pm$0.003}
			& 0.026$\pm$0.018
			& \best{0.023$\pm$0.008}
			& \second{0.952$\pm$0.017}
			\\
			&
			Proposed(\textit{5-step SDE}) &
			\best{2.73$\pm$0.51} &
			\best{1.99$\pm$0.44} &
			\best{0.921$\pm$0.081} &
			\best{0.989$\pm$0.003} &
			\second{0.013$\pm$0.005} &
			0.039$\pm$0.005 &
			0.949$\pm$0.009
			\\
			&
			Proposed(\textit{1-step ODE}) &
			3.67$\pm$0.81 &
			2.74$\pm$0.74 &
			0.886$\pm$0.096 &
			0.974$\pm$0.009 &
			\best{0.011$\pm$0.004} &
			\second{0.031$\pm$0.002} &
			\best{0.958$\pm$0.005}
			\\
			
			\hline
			\multirow{14}{*}{\begin{tabular}[c]{@{}c@{}}GaoFen2 \\ (GF2, 4-band)\end{tabular}} &
			BDSD-PC~\cite{bdsd-pc} &
			1.71$\pm$0.32 &
			1.70$\pm$0.41 &
			0.993$\pm$0.031 &
			0.945$\pm$0.017 &
			0.076$\pm$0.030 &
			0.155$\pm$0.028 &
			0.781$\pm$0.041 \\
			&
			MTF-GLP-FS~\cite{mtf-glp-fs} &
			1.68$\pm$0.35 &
			1.60$\pm$0.35 &
			0.891$\pm$0.026 &
			0.939$\pm$0.020 &
			0.035$\pm$0.014 &
			0.143$\pm$0.028 &
			0.823$\pm$0.035 \\
			&
			BT-H~\cite{BT-H} &
			1.68$\pm$0.32 &
			1.55$\pm$0.36 &
			0.909$\pm$0.029 &
			0.951$\pm$0.015 &
			0.060$\pm$0.025 &
			0.131$\pm$0.019 &
			0.817$\pm$0.031 \\
			& LRTCFPan~\cite{LRTCFPan} & 1.30$\pm$0.31& 1.27$\pm$0.34& 0.935$\pm$0.030& 0.964$\pm$0.012 & 0.033$\pm$0.027& 0.090$\pm$0.014& 0.881$\pm$0.023
			\\
			\cline{2-9}
			&
			PNN~\cite{pnn} &
			1.05$\pm$0.23 &
			1.06$\pm$0.24 &
			0.960$\pm$0.010 &
			0.977$\pm$0.005 &
			0.037$\pm$0.029 &
			0.094$\pm$0.022 &
			0.873$\pm$0.037 \\
			&
			PanNet~\cite{pannet} &
			1.00$\pm$0.21 &
			0.92$\pm$0.19 &
			0.967$\pm$0.010 &
			0.983$\pm$0.004 &
			{0.021$\pm$0.011} &
			0.080$\pm$0.018 &
			0.901$\pm$0.020 \\
			&
			MSDCNN~\cite{msdcnn} &
			1.05$\pm$0.22 &
			1.04$\pm$0.23 &
			0.961$\pm$0.011 &
			0.978$\pm$0.005 &
			0.027$\pm$0.013 &
			0.073$\pm$0.009 &
			0.902$\pm$0.013 \\
			&
			DiCNN~\cite{dicnn} &
			1.05$\pm$0.23 &
			1.08$\pm$0.25 &
			0.959$\pm$0.010 &
			0.977$\pm$0.006 &
			0.041$\pm$0.012 &
			0.099$\pm$0.013 &
			0.864$\pm$0.017 \\
			&
			FusionNet~\cite{fusionnet} &
			0.97$\pm$0.21 &
			0.99$\pm$0.22 &
			0.964$\pm$0.009 &
			0.981$\pm$0.005 &
			0.040$\pm$0.013 &
			0.101$\pm$0.013 &
			0.863$\pm$0.018 \\
			&
			LAGConv~\cite{lagconv} &
			{0.78$\pm$0.15} &
			0.69$\pm$0.11 &
			0.980$\pm$0.009 &
			0.991$\pm$0.002 &
			0.032$\pm$0.013 &
			0.079$\pm$0.014 &
			0.891$\pm$0.020 \\
			&
			Invformer~\cite{panformer} & 0.83$\pm$0.14 & 0.70$\pm$0.11 & 0.977$\pm$0.012 & 0.980$\pm$0.002 & 0.059$\pm$0.026 & 0.110$\pm$0.015 & 0.838$\pm$0.024 \\
			&
			DCFNet~\cite{dcfnet} &
			0.89$\pm$0.16 &
			0.81$\pm$0.14 &
			0.973$\pm$0.010 &
			0.985$\pm$0.002 &
			0.023$\pm$0.012 &
			{0.066$\pm$0.010} &
			{0.912$\pm$0.012} \\
			&
			HMPNet~\cite{hmpnet} & 0.80$\pm$0.14 & \second{0.56$\pm$0.10} & {0.981$\pm$0.030} & \second{0.993$\pm$0.003} & 0.080$\pm$0.050 & 0.115$\pm$0.012 & 0.815$\pm$0.049\\
			\cline{2-9}
			&
			PanDiff~\cite{pandiff}
			& 0.89$\pm$0.12
			& 0.75$\pm$0.10
			& 0.979$\pm$0.010
			& 0.989$\pm$0.002
			& 0.027$\pm$0.020
			& 0.073$\pm$0.010
			& 0.903$\pm$0.021
			\\
			&
			DDIF~\cite{DDIF}
			& \second{0.64$\pm$0.12}
			& 0.57$\pm$0.10
			& \second{0.986$\pm$0.008}
			& 0.986$\pm$0.004
			& \second{0.020$\pm$0.011}
			& 0.041$\pm$0.010
			& 0.940$\pm$0.014
			\\
			&
			Proposed(\textit{5-step SDE}) & 
			\best{0.59$\pm$0.11} &
			\best{0.53$\pm$0.10} &
			\best{0.987$\pm$0.007} &
			\best{0.994$\pm$0.002} &
			0.026$\pm$0.012 &
			\second{0.028$\pm$0.010} &
			\second{0.947$\pm$0.013}
			\\
			&
			Proposed(\textit{1-step ODE}) &
			0.80$\pm$0.15 &
			0.72$\pm$0.11 &
			0.978$\pm$0.009 &
			0.988$\pm$0.002 &
			\best{0.016$\pm$0.012} &
			\best{0.017$\pm$0.004} &
			\best{0.967$\pm$0.010}
			\\

			\hline
			\multicolumn{1}{l}{} &
			Ideal value &
			\textbf{0} &
			\textbf{0} &
			\textbf{1} &
			\textbf{1} &
			\textbf{0} &
			\textbf{0} &
			\textbf{1} \\ 
			\hline
			\bottomrule
		\end{tabular}%
	}
\end{table}

\subsection{Datasets}
We conduct experiments on a standard pansharpening data-collection (\textit{i.e.}, Pancollection\footnote{\url{ https://liangjiandeng.github.io/PanCollection.html}}), which includes WorldView-3 (WV3, 8 bands) and GaoFen-2 (GF2, 4band) data. The reduced data are simulated from real-world images using Wald's protocol~\cite{wald1997fusion}. WV3 dataset contains 9714/1080 samples for training and validation. Each sample consists of a PAN/LRMS/GT image pair of size $64\times 64\times 1$, $16\times 16\times 8$, and $64\times 64\times 8$, respectively. PAN image has a spatial resolution of 0.3m, whereas the LRMS image has a spatial resolution of 1.2m. GF2 dataset contains
19809/2201 samples for training and validation.
Each sample consists of a PAN/LRMS/GT image
pair of sizes $64\times 64\times 1, 16\times 16\times 4$, and $64\times 64\times 4$,
respectively. PAN images have a spatial resolution
of 0.8m, while LRMS images have a spatial
resolution of 3.2m. To evaluate the performance, we perform the reduced-resolution and full-resolution
experiments to compute the reference and non-reference metrics, respectively. The WV3 reduced-resolution test set has 20 PAN/LRMS/GT image pairs of size $256\times 256\times 1, 64\times 64\times 8$ and $256\times 256\times 8$. The WV3 full-resolution test set consists of a PAN/LRMS image pair of size  $512\times 512\times 1, 128\times 128\times 8$. The GF2 reduced and full-resolution test sets have the same samples as the WV3 test set but only differ in the number of bands (4 bands).

To verify the performance of each method on hyperspectral real remote sensing data, we utilize the GF5-GF1 public dataset~\cite{guo2022deep}. The GF5-GF1 dataset contains HSIs and MSIs, where the spatial size of MSIs is twice that of HSIs (i.e., $1161\times 1120\times 150$ for HSIs and $2332\times 2258\times 4$ for MSIs). We randomly cropped HSIs and MSIs into patches of size $40\times 40$ and $80\times 80$ with an overlap of 10 and 20, respectively, to generate real data. Based on the same patching scheme, furthermore, we can get the HRHSI ($80\times 80$) and HRMSI ($160\times 160$) patches for simulated data. We applied the provided modulated transfer functions
(MTFs) to the patched HRHSI and the patched HRMSI following Wald's protocol. To get the final HRMSI, we adjusted the simulated HRMSI by using the modified $\mathbf{ M = (M -B\cdot R)/A}$ (as proposed in \cite{guo2022deep}), where $\mathbf A$ and $\mathbf B$ are the correlated weight tensors, and $\mathbf R$ is the spectral response function. Finally, we obtained 150 simulated LRHSI and HRMSI patches with sizes $40\times 40\times 150$ and $80\times 80\times 4$, respectively, with the original LRHSI serving as ground truth. We divided the 150 patches into train/validation/test sets using the following percentages
80\%/10\%/10\%.

\subsection{Benchmarking}
\newcommand{\methodcomp}[1]{\textsuperscript{\textit{#1}}}
For the WV3 and GF2 multispectral datasets, we compare the proposed SB SDE/ODE with representative methods such as CS, MRA, VO, regression-based deep methods, and recent DPM-based methods as follows:
\begin{enumerate}
	\item Model-based methods: BDSD-PC\methodcomp{TGRS, 2017}~\cite{bdsd-pc}, MTF-GLP-FS\methodcomp{TIP, 2018}~\cite{mtf-glp-fs}, BT-H\methodcomp{GRSL, 2017}~\cite{BT-H}, and LRTCFPan\methodcomp{TIP, 2023}~\cite{LRTCFPan};
	\item Regressive DL-based methods: PNN\methodcomp{RS, 2016}~\cite{pnn}, PanNet\methodcomp{ICCV, 2017}~\cite{pannet}, MSDCNN\methodcomp{JSTAR, 2018}~\cite{msdcnn}, DiCNN\methodcomp{JSTAR, 2019}~\cite{dicnn}, FusionNet\methodcomp{TGRS,2020}~\cite{fusionnet}, LAGConv\methodcomp{AAAI,2022}~\cite{lagconv}, Invformer\methodcomp{AAAI, 2022}~\cite{ctinn}, DCFNet\methodcomp{ICCV, 2021}~\cite{dcfnet}, HMPNet\methodcomp{TNNLS, 2023}~\cite{hmpnet};
	\item DPM (score)-based methods: PanDiff\methodcomp{TGRS, 2023}~\cite{pandiff},
	DDIF\methodcomp{InFus, 2024}~\cite{DDIF}.
\end{enumerate}

For the hyperspectral real GF5-GF1 dataset, we choose widely-applied model-based methods, DL-based regressive methods, and DPM (score)-based methods to compare, listed as follows:
\begin{enumerate}
	\item Model-based methods: CNMF\methodcomp{IGRSS,2011}~\cite{CNMF}, Hysure\methodcomp{TGRS, 2014}~\cite{Hysure}, GSA\methodcomp{TGRS, 2007}~\cite{GSA}, LTTR\methodcomp{TNNLS, 2019}~\cite{LTTR}, LTMR\methodcomp{TIP, 2019}~\cite{LTMR}, MTF-GLP-HS\methodcomp{JSATR, 2015}~\cite{mtf-glp-hs};
	\item Regessive DL-based methods: ResTFNet\methodcomp{InFus, 2020}~\cite{ResTFNet}, SSRNet\methodcomp{TGRS, 2020}~\cite{SSRNet}, Fusformer\methodcomp{GRSL, 2022}~\cite{hu2022fusformer}, HSRNet\methodcomp{TNNLS, 2021}~\cite{HSRNet}, DHIF\methodcomp{TCI, 2022}~\cite{DHIF};
	\item DPM (score)-based methods: PanDiff\methodcomp{TGRS, 2023}~\cite{pandiff},
	DDIF\methodcomp{InFus, 2024}~\cite{DDIF}.
\end{enumerate}
Following \cite{deng22}, we employ SAM~\cite{sam}, ERGAS~\cite{ergas}, Q2n~\cite{q2n}, and SCC~\cite{scc} metrics to validate the effectiveness of different methods on the WV3 and GF2 reduced-resolution datasets. For the GF5-GF1 dataset with a greater number of spectral bands, we include PSNR and SSIM metrics. For the full-resolution experiments, we use $D_\lambda$, $D_s$, and HQNR~\cite{HQNR} as the evaluation metrics.
Note that the same training and data augmentation strategy is applied to the compared methods for a fair comparison.

\subsection{Reduced Assessments}
\begin{figure}[ht]
	\centering
	\includegraphics[width=\linewidth]{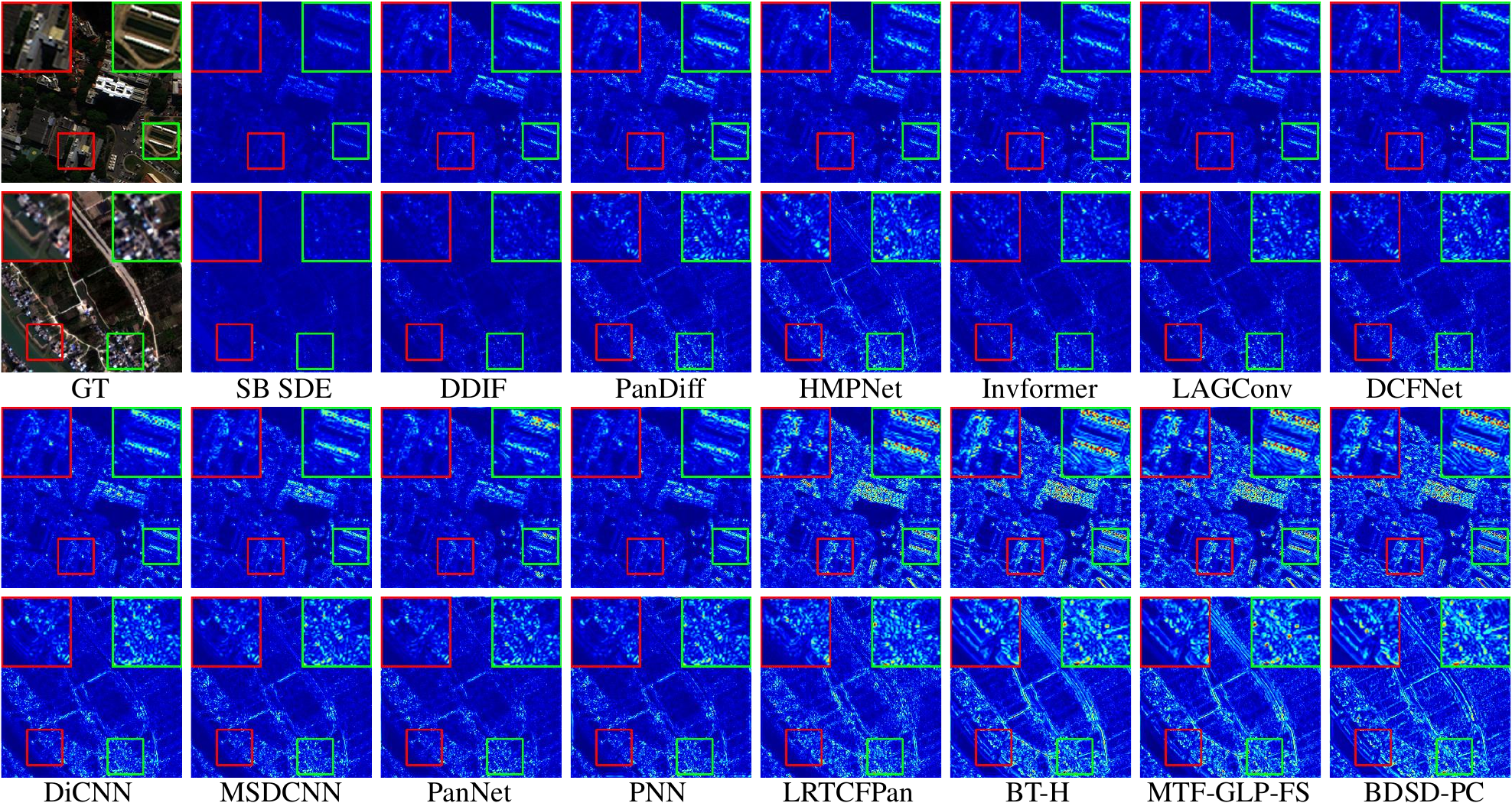}
	\caption{Illustration of ground truth and error maps with various pansharpening methods on WV3 (Row 1 and 3) and GF2 (Row 2 and 4) reduced-resolution test set.}
	\label{fig:wv3-reduced-comp}
\end{figure}

\begin{figure}[ht]
	\centering
	\includegraphics[width=\linewidth]{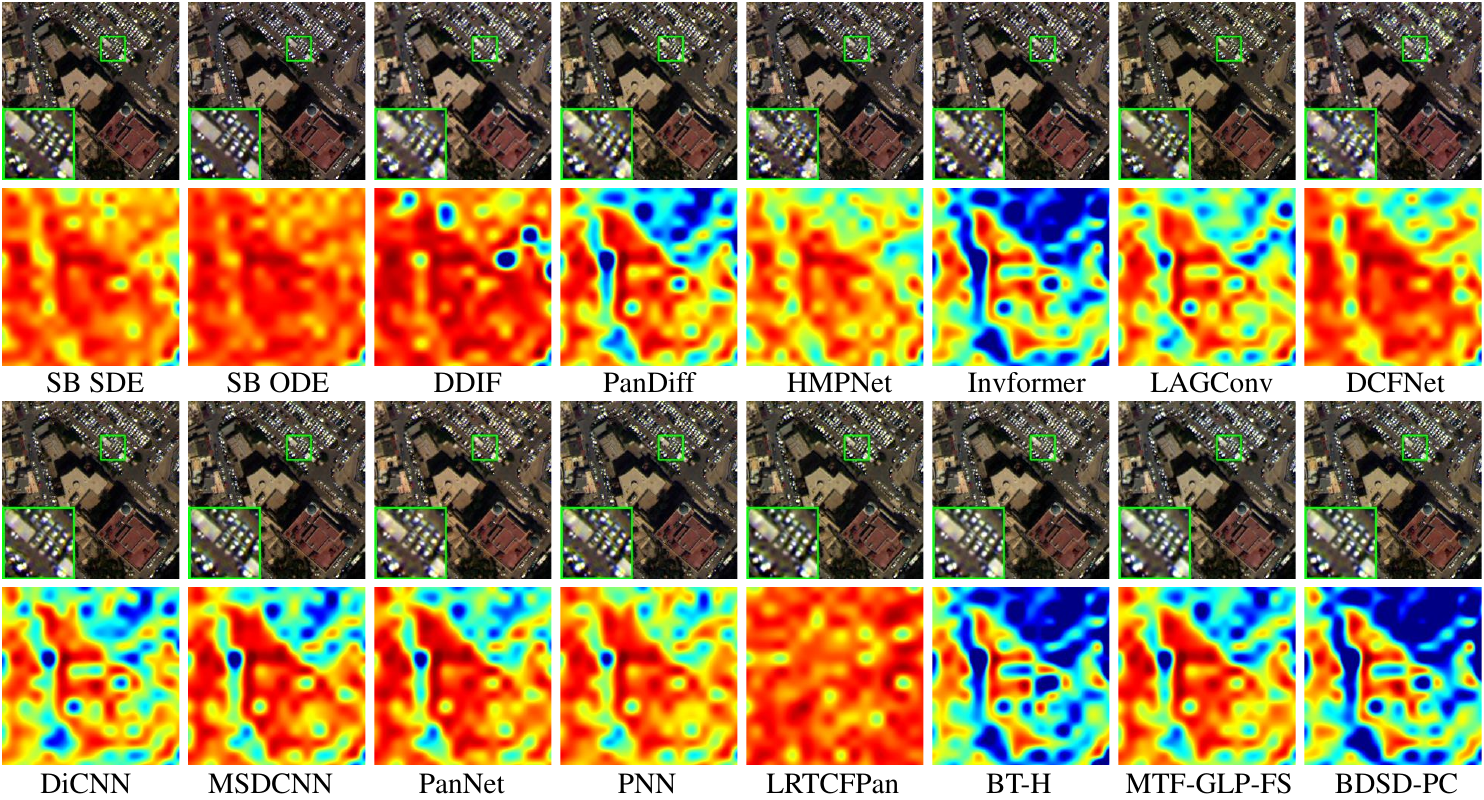}
	\caption{Illustration of fused full-resolution image and corresponding HQNR map on WV3 full-resolution test set. The color red in the HQNR map indicates a value close to 1, while the color blue represents a value close to 0. A higher value in the HQNR map indicates a better pansharpening performance.}
	\label{fig:wv3-full-comp}
\end{figure}
The reduced assessments of WV3, GF2, and GF5-GF1 datasets are provided at the left panel of Tabs.~\ref{tab:pansharpening} and \ref{tab: gf5_gf1_reduced_full}. Some error maps are shown in Fig.~\ref{fig:wv3-reduced-comp}.
We can see that all model-based methods still struggle to beat the first DL-based method PNN. With the continuous design and optimization of network architectures, regressive deep learning-based methods have shown significant improvements in their performance on reduced-resolution data, achieving near state-of-the-art (SOTA) results.

The exploration of PanDiff for pansharpening is commendable. However, due to the choice of the $\epsilon$-learning objective (readers are encouraged to refer to \cite{improved_ddpm} for details about $\epsilon$-learning), it has achieved only a relatively average fusion performance. It is worth noting that PanDiff still utilizes the DDPM ancestral sampling strategy~\cite{ho2020denoising}, which significantly slows down the sampling process. It requires 2000 network forward steps to sample a single fused image.
In contrast, DDIF has proposed a more efficient network architecture and alleviated the issues of DPM-based methods by using 25-step DDIM sampling~\cite{ddim}, achieving state-of-the-art (SOTA) performance.
However, due to the forward strategy of DPM, DDIF starts sampling from pure Gaussian noise, overlooking the inherent characteristics of the pansharpening task. Additionally, the adopted SDE-based sampling curvature in DDIF is curved, leading to larger discretization errors and making efficient sampling more challenging.

The proposed SB SDE outperforms DPM-based methods including PanDiff and DDIF.
Alternatively, SB ODE demonstrates a more efficient sampling procedure (5-steps sampling \textit{v.s.} only 1-step sampling) that achieves competitive pansharpening performance.

\subsection{Full Assessments}
The full assessments are shown in the right panel of Tabs.~\ref{tab:pansharpening} and \ref{tab: gf5_gf1_reduced_full}.
SB SDE exhibits full-resolution metrics on the WV3 dataset that are second only to DDIF. Its comprehensive metric HQNR surpasses all model-based, DL-based regressive, and DPM-based methods on the GF2 dataset. Additionally, it demonstrates highly competitive performance on the GF5-GF1 dataset.
For SB ODE, its highly efficient one-step sampling makes it an excellent pansharpening method compared to other DPM-based methods.
As depicted in Fig.~\ref{fig:wv3-full-comp}, both the proposed SB SDE and ODE demonstrate satisfying fusion results. The fused images exhibit the spectral characteristics of the LRMS and the spatial features of the PAN.

\subsection{Ablation Study}
In this subsection, we ablate the key factors in the SB SDE/ODE learning and sampling and provide experimental results on typical pansharpening WV3 dataset.
\subsubsection{Different Parameterization}
In Sect.~\ref{sect: bmp-train-sampling-algos.}, we provide four different learning objectives for SB SDE/ODE:
\begin{enumerate}
	\item[(a)] bridge endpoint $X_0$;
	\item[(b)] bridge length $Y_1-X_0$;
	\item[(c)] bridge posterior length $Y_t-X_0$;
	\item[(d)] bridge endpoint with score $[X_0, s]$ (SB SDE only).
\end{enumerate}
We design three ablations on the learning objectives of the WV3 dataset and the results are provided in the upper panel of Tab.~\ref{tab:ablation}.

\begin{table}[!t]
	\centering 
	\caption{Result on the GF5-GF1 reduced-resolution and full-resolution datasets. Some conventional methods (the first six rows) and the DL-based approaches are compared. The best results are in \best{red} and the second best results are in \second{blue}.}
	\label{tab: gf5_gf1_reduced_full}
	\setlength\tabcolsep{2.5pt}
	\resizebox{\linewidth}{!}{
		\begin{tabular}{l|ccccc|ccc}
			\toprule
			\hline
			\multirow{2}{*}{Methods}& \multicolumn{5}{c|}{Reduced Resolution (RR): Avg$\pm$std} & \multicolumn{3}{c}{Full Resolution (FR): Avg$\pm$std} \\  & PSNR & SSIM & Q2n & SAM & ERGAS & $D_\lambda$ & $D_s$ & HQNR \\
			\hline
			CNMF~\cite{CNMF} & 44.25$\pm$3.89 & 0.9823$\pm$0.0122 & 0.742$\pm$0.177 & 0.851$\pm$0.213 & 2.761$\pm$0.767 & 0.045$\pm$0.083 & 0.059$\pm$0.050 & 0.898$\pm$0.084 \\
			Hysure~\cite{Hysure} & 42.52$\pm$4.52 & 0.9723$\pm$0.0137 & 0.732$\pm$0.146 & 1.305$\pm$0.406 & 3.677$\pm$1.317 & 0.041$\pm$0.076 & 0.074$\pm$0.105 & 0.887$\pm$0.117 \\
			GSA~\cite{GSA} & 44.99$\pm$5.34 & 0.9795$\pm$0.0119 & 0.754$\pm$0.131 & 1.200$\pm$0.332 & 2.898$\pm$0.956 & 0.053$\pm$0.103 & 0.067$\pm$0.062 & 0.882$\pm$0.101 \\
			LTTR~\cite{LTTR} & 47.15$\pm$2.91 & 0.9897$\pm$0.0028 & 0.844$\pm$0.098 & 2.159$\pm$0.286 & 5.808$\pm$2.732 & 0.099$\pm$0.123 & 0.047$\pm$0.023 & 0.860$\pm$0.106 \\
			LTMR~\cite{LTMR} & 45.52$\pm$2.42 & 0.9898$\pm$0.0030 & 0.849$\pm$0.108 & 1.595$\pm$0.344 & 2.720$\pm$1.277 & 0.057$\pm$0.103 & 0.036$\pm$0.018 & 0.910$\pm$0.105 \\
			MTF-GLP-HS~\cite{mtf-glp-hs} & 45.60$\pm$5.82 & 0.9837$\pm$0.0120 & 0.777$\pm$0.146 & 0.856$\pm$0.265 & 2.848$\pm$1.154 & 0.030$\pm$0.048 & 0.075$\pm$0.105 & 0.897$\pm$0.106 \\
			\hline
			ResTFNet~\cite{ResTFNet} & 46.98$\pm$2.11 & 0.9934$\pm$0.0022 & 0.850$\pm$0.100 & 0.906$\pm$0.130 & 3.323$\pm$3.123 & 0.042$\pm$0.078 & 0.088$\pm$0.059 & 0.874$\pm$0.099 \\
			SSRNet~\cite{SSRNet} & 45.49$\pm$2.69 & 0.9880$\pm$0.0047 & 0.850$\pm$0.094 & 1.039$\pm$0.210 & 4.863$\pm$4.161 & 0.117$\pm$0.140 & 0.054$\pm$0.019 & 0.836$\pm$0.134 \\
			Fusformer~\cite{hu2022fusformer} & 49.74$\pm$4.64 & 0.9914$\pm$0.0031 & 0.891$\pm$0.076 & 0.638$\pm$0.155 & 4.761$\pm$0.592 & {0.030$\pm$0.056} & 0.040$\pm$0.025 & 0.931$\pm$0.064 \\
			HSRNet~\cite{HSRNet} & 49.81$\pm$3.05 & 0.9964$\pm$0.0016 & 0.888$\pm$0.081 & 0.693$\pm$0.139 & 0.901$\pm$0.451 & 0.038$\pm$0.073 & 0.047$\pm$0.020 & 0.917$\pm$0.075 \\
			DHIF~\cite{DHIF} & {55.35$\pm$4.20} & {0.9982$\pm$0.0009} & {0.929$\pm$0.076} & {0.309$\pm$0.062} & {0.885$\pm$0.388} & \best{0.031$\pm$0.057} & \best{0.034$\pm$0.022} & \best{0.937$\pm$0.062} \\
			\hline
			PanDiff~\cite{pandiff} & 50.43$\pm$2.89 & 0.9958$\pm$0.0017 & 0.903$\pm$0.080 & 0.633$\pm$0.108 & 1.877$\pm$1.393 & 0.033$\pm$0.059 & 0.041$\pm$0.027 & 0.928$\pm$0.063 \\
			DDIF~\cite{DDIF} & \second{56.40$\pm$3.82} & \second{0.9984$\pm$0.0007} & \second{0.938$\pm$0.064} & \second{0.273$\pm$0.049} & \second{0.845$\pm$0.505} & \second{0.033$\pm$0.057} & \second{0.035$\pm$0.021} & \second{0.933$\pm$0.057} \\
			Proposed(5-step SDE) & \best{60.94$\pm$2.03} & \best{0.9995$\pm$0.0006} & \best{0.964$\pm$2.031} & \best{0.233$\pm$0.031} & \best{0.679$\pm$0.541} & 0.038$\pm$0.071 & 0.040$\pm$0.020 & 0.924$\pm$0.071\\
			Proposed(1-step ODE) & 50.64$\pm$2.62 & 0.9971$\pm$0.0012 & 0.909$\pm$0.081 & 0.743$\pm$0.066 & 1.721$\pm$1.265 & 0.036$\pm$0.066 & 0.040$\pm$0.021 & 0.926$\pm$0.069
			\\ 
			\hline
			Ideal value & $+\boldsymbol{\infty}$ & \textbf{1} & \textbf{1} & \textbf{0} & \textbf{0} & \textbf{0} & \textbf{0} & \textbf{1} \\
			\hline
			\bottomrule
		\end{tabular}
	}
\end{table}

\begin{table}[!t]
	\centering
	\caption{Ablation studies on learning objectives and training timesteps. The best options are in \best{red}. The left-hand side of ``/'' represents the results of SB SDE, while the right-hand side represents SB ODE's.}
	\label{tab:ablation}
	\setlength\tabcolsep{1pt}
	\resizebox{0.6\linewidth}{!}{%
		\begin{tabular}{c|c|cccc}
			\toprule[0.8pt]
			\hline
			Ablations                   & \multicolumn{1}{c|}{\diagbox[width=6.7em]{Options}{Metrics}} & SAM & ERGAS & Q2n & SCC \\ \hline
			\multirow{4}{*}{Objectives} & $X_0$  &      2.73/3.67        &    1.99/2.74  &     0.921/0.886 &   0.989/0.974  \\
			& $Y_1-X_0$    &   2.82/3.69 & 2.16/2.78   &  0.918/0.880    & 0.987/0.973 \\
			& $Y_t-X_0$   &  2.76/3.67              & 2.04/2.76    &  0.919/0.885         &  0.986/0.974   \\
			& $[X_0, \epsilon]${\footnotesize (SDE only)}      &  2.74 & 2.00    &   0.921    &   0.898    \\ \hline
			\multirow{2}{*}{\makecell{Training\\Timesteps}}  & 1000           &  2.72/3.63    &    1.98/2.70 & 0.920/0.889      &   0.990/0.976   \\
			& 500           &  2.73/3.66     &  1.99/2.72   &  0.921/0.887 &       0.989/0.975 \\
			\hline 
			\bottomrule[0.8pt]
		\end{tabular}%
	}
\end{table}
We discovered that utilizing the score as the learning objective resulted in an unstable training procedure and slow convergence. However, when using the bridge length as the objective, we observed a more stable training loss and decent fusion performance. Although switching to bridge length with score improved the performance, it reintroduced training instability. Nevertheless, our default setting, which employs the bridge endpoint, outperforms other objectives and achieves the best fusion metrics.


\subsubsection{Timesteps of Training and Fast Sampling}
As demonstrated in DDPM~\cite{ho2020denoising} and iDDPM~\cite{improved_ddpm}, the choice of training timestep has a significant impact on the final generation performance. Previous studies have acknowledged that larger timesteps can enhance the model's generation ability. To investigate the influence of the timestep on bridge matching, we conducted training with different timestep values: 1000 steps, 500 steps, and 100 steps. The corresponding metrics are listed in the lower panel of Tab.~\ref{tab:ablation}. 

\begin{figure}
	\centering
	\begin{minipage}[t]{0.5\linewidth}
		\includegraphics[width=\linewidth]{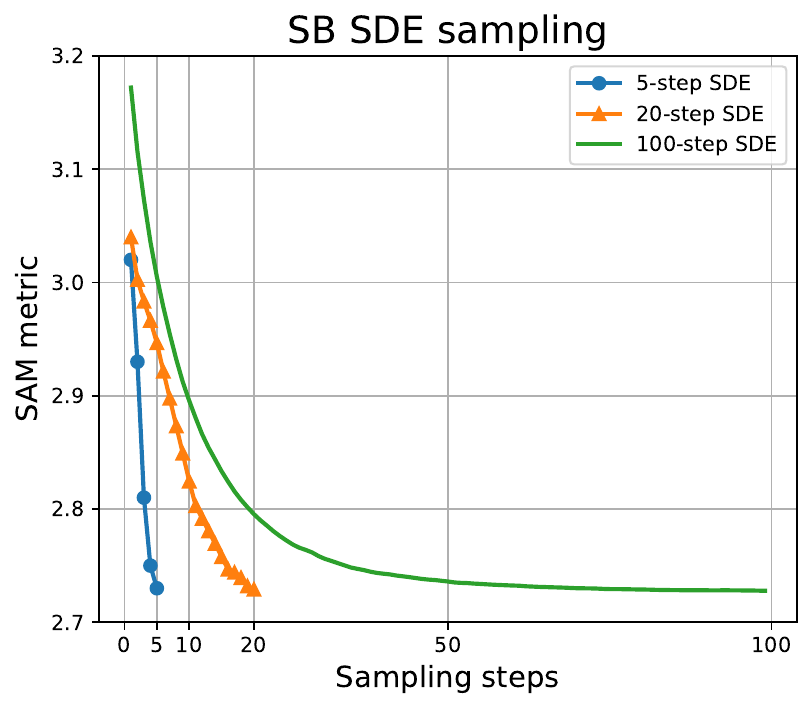}
	\end{minipage}\begin{minipage}[t]{0.5\linewidth}
		\includegraphics[width=\linewidth]{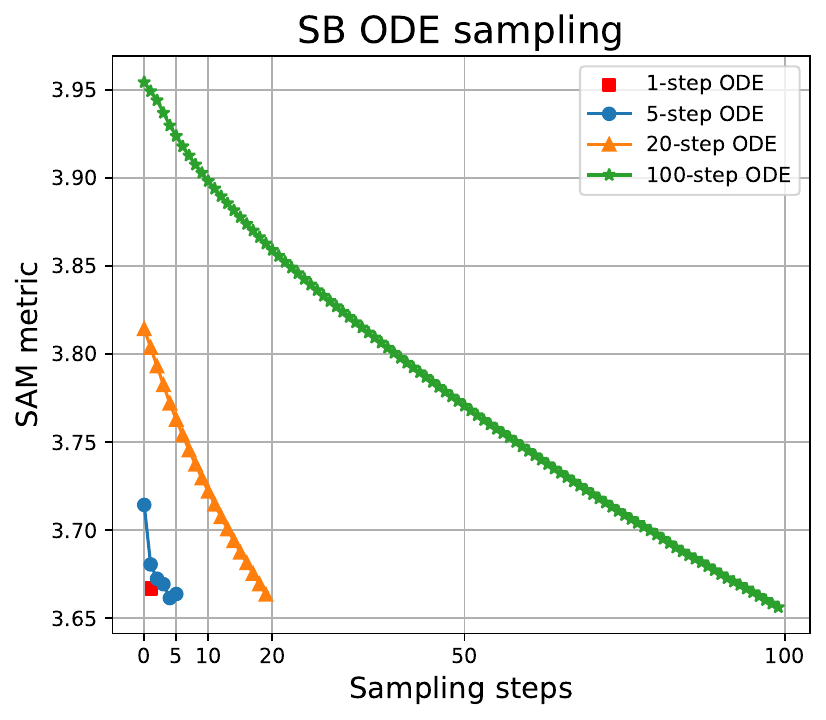}
	\end{minipage}
	\caption{SB SDE/ODE performance of default setting with various sampling steps on WV3 dataset.}
\end{figure}

\subsubsection{Better Architecture for Schr\"odinger Matching}
To validate the suitability of the designed SBM-Net for SB matching training, we employ $X_0$ as the prediction objective, substituting various networks for validation. Additionally, we compare it with other architectures: 1) Vanilla U-Net~\cite{ho2020denoising}, PanDiff~\cite{pandiff}, and DDIF~\cite{DDIF}. The results are shown in Tab.~\ref{tab:arch}. It can be seen that the proposed SBM-Net can outperform all previous DPM-based networks.

\section{Disscussion}
\label{sect:disscuss}

\subsection{Difference and Connections Between SB SDE/ODE, Diffusion, Flow Matching and Rectify Flow}
\noindent \textbf{Diffusion} is an SDE framework bridging the Gaussian distribution and the target distribution. Additionally, \cite{song2019generative} removed the stochastic term from the SDE to construct the PF ODE, which can be expressed in discrete form:
\begin{equation}
	Y_t=\alpha_t Y_0+\sigma_t Z, Z\sim \mathcal N(0, \mathbf I).
\end{equation}
Previous works~\cite{ho2020denoising,song2019generative,song2020score} on diffusion propose various methods for constructing ODE formulations:
\begin{subequations}
	\begin{align}
		&\begin{cases}
			&\text{(sub-)VP ODE}: \alpha_t = \exp\left(-\frac{1}{4} a(1-t)^2-\frac{1}{2}b(1-t)\right),\\
			&\text{VP ODE}: \sigma_t = \sqrt{1-\alpha_t^2}, \text{sub-VP ODE}: \sigma_t = 1 - \alpha_t^2,\label{eq:vp-sde}\\
			&\text{where } a = 19.9, b=0.1;\\
		\end{cases}\\
		&\begin{cases}
			&\text{VE ODE}: \alpha_t=1,\sigma_t=\sigma_{\min}\sqrt{r^{2(1-t)}-1},\\
			&\text{s.t., } \sigma_{\max}:=r\sigma_{\min}, Y_1=Y_0+\beta_1 Z\approx \sigma_{\max}Z
		\end{cases}
	\end{align}
\end{subequations}
All of them can be regarded as interpolations between two distributions. In this view, the cost incurred by a sampler transferring one distribution to another depends on the distance and curvature of its trajectory. When the coupling $(Y_0, Z)$ is given, the distance is prefixed, and the cost depends on the curvature. For different SDEs or ODEs, $\alpha_t$ and $\beta_t$ determine the curvature of the trajectory.
From the above expressions, it is evident that the conventional ODEs previously used do not yield straight trajectories due to the nonlinear relationship between $\alpha_t$ and $\sigma_t$. Consequently, these ODE methods require large sampling steps to achieve satisfactory sampling outcomes. In contrast, our method produces straight sampling trajectories and demonstrates excellent sampling efficiency and minimal truncation errors even under the coarsest discrete method (Euler method).
On the other hand, as pointed out in Sect.~\ref{sec:deg-ode-sde}, the Diffusion framework facilitates transport between Gaussian and target distributions, but it is not conducive to inverse problems with a known prior distribution. Additionally, fixing one end to be a Gaussian distribution is unnecessary. The proposed SB SDE/ODE addresses this issue. Moreover, one can notice that the SB SDE can be degraded to DPM when setting $p_Y$ as Gaussian.

\noindent \textbf{Flow Matching} extends Countinous Normalizing Flow~\cite{chen2018neural} and utilizing the change of varibles of \cite{chen2021likelihood} to establish an ODE generative framework. A vector ﬁeld $v_t$ can be used to construct a time-dependent diffeomorphic map $\phi: [0,1]\times \mathbb R^d\to \mathbb R^d$, which is defined by an ODE:
\begin{subequations}
	\begin{align}
		\frac{d}{dt}\phi_t(y)&=v_t(\phi_t(y)),\\
		\phi_0(y)&=y.
	\end{align}
\end{subequations}

\begin{table}[t]
	\centering
	\caption{Ablation study on different architecture for SB SDE.}
	\label{tab:arch}
	\resizebox{0.5\linewidth}{!}{
		\begin{tabular}{c|cccc}
			\toprule
			\hline
			Arch. & SAM & ERGAS & Q2n & SCC \\
			\hline
			U-Net & 3.24 & 2.43 & 0.907 & 0.982 \\
			PanDiff & 3.10 & 2.28 & 0.912 & 0.985 \\
			DDIF & 2.76 & 2.03 & 0.919 & 0.989 \\
			SBM-Net & 2.73 & 1.99 & 0.921 & 0.989 \\
			\hline
			\bottomrule
	\end{tabular}}
	
\end{table}
The proposed SB ODE can be degraded to the flow matching ODE.
\begin{corollary}
	When $\beta_t$ is a constant $\beta$ over $t$, the velocity $v_t = (Y_t-X_0)/t$ and the mean in Eq.~\eqref{eq: mean and variance} turns into $\mu_t=(1-t)X_0+t Y_1$, which returns to \textbf{Example II} in Flow Matching~\cite{lipman2023flow}.
\end{corollary}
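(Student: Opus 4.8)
The plan is to prove the statement by direct substitution into the formulas already established in Proposition~\ref{prop:analtic-posterior} and Corollary~\ref{cor: SB ODE}, and then to match the resulting expressions with the optimal-transport conditional path of Flow Matching. First I would set $\beta_\tau \equiv \beta$ in the definitions $\sigma_t^2 = \int_0^t \beta_\tau\, d\tau$ and $\hat\sigma_t^2 = \int_t^1 \beta_\tau\, d\tau$, obtaining $\sigma_t^2 = \beta t$ and $\hat\sigma_t^2 = \beta(1-t)$, so that the normalizing factor $\sigma_t^2 + \hat\sigma_t^2 = \beta$ is constant in $t$. Plugging these into the mean in \eqref{eq: mean and variance} gives
\begin{equation*}
	\mu_t = \frac{\beta(1-t)}{\beta}X_0 + \frac{\beta t}{\beta}Y_1 = (1-t)X_0 + tY_1 ,
\end{equation*}
and plugging into the SB ODE velocity \eqref{eq:ot-ode} gives $v_t(Y_t|X_0) = \frac{\beta}{\beta t}(Y_t - X_0) = (Y_t - X_0)/t$, which are exactly the claimed expressions.

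Next I would verify the internal consistency of the interpolant with this velocity and read off the Flow Matching correspondence. Along the bridge, $Y_t = \mu_t = (1-t)X_0 + tY_1$ by \eqref{eq: Y_t-in-ODE} (the posterior variance $\Sigma_t = \sigma_t^2\hat\sigma_t^2/(\sigma_t^2+\hat\sigma_t^2) = \beta t(1-t)$ does not vanish here, but in the $g_t\to 0$ limit of Corollary~\ref{cor:OT-ODE} the stochastic part has already been removed, so the path is deterministic), hence $Y_t - X_0 = t(Y_1 - X_0)$ and $v_t = Y_1 - X_0 = \dot\mu_t$; i.e.\ the flow map $\phi_t$ transports $X_0$ to $Y_1$ along a straight line at constant speed. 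This is precisely the optimal-transport conditional vector field of \textbf{Example II} in Flow Matching~\cite{lipman2023flow}: identifying their data endpoint with $X_0$ and their source (noise) endpoint with $Y_1$ — equivalently substituting $t \mapsto 1-t$ to match their time convention — and taking the limit $\sigma_{\min}\to 0$ of their conditional standard deviation, their $\mu_t(x) = tx_1$, $\sigma_t(x) = 1-(1-\sigma_{\min})t$ and $u_t(x|x_1) = (x_1-(1-\sigma_{\min})x)/(1-(1-\sigma_{\min})t)$ collapse to exactly the linear interpolation and constant-velocity field obtained above.

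The computation itself is elementary; the only real care required — and the part I would write out most carefully — is the bookkeeping around orientation and degeneracy: (i) the time reversal needed to match our convention ($t=0$ is data $X_0$, $t=1$ is prior $Y_1$) with Flow Matching's ($t=1$ is data), so that our generative (backward) flow lines up with their noise-to-data flow; and (ii) confirming that the $g_t\to 0$ limit invoked in Corollary~\ref{cor: SB ODE} legitimately produces a deterministic marginal flow, so that the phrase ``returns to Example~II'' is a statement about the conditional probability path and its generating vector field, not merely about a single moment. Neither of these is a genuine obstacle — they are consistency checks that make the identification with Flow Matching precise rather than merely formal — but they are where an inattentive argument could go wrong, so they deserve an explicit line or two.
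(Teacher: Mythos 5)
Your proof is correct and follows essentially the same route as the paper's: substitute $\beta_\tau\equiv\beta$ to get $\sigma_t^2=\beta t$ and $\hat\sigma_t^2=\beta(1-t)$, then read off $\mu_t=(1-t)X_0+tY_1$ and $v_t=(Y_t-X_0)/t$ from Proposition~\ref{prop:analtic-posterior} and Corollary~\ref{cor: SB ODE}. The additional consistency checks you flag (the time-reversal to match Flow Matching's convention and the determinism of the $g_t\to 0$ limit) are sensible and go slightly beyond the paper's terser argument, but they do not change the substance.
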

\begin{proof}
	When $\beta_t = \beta$ is a constant, the term $\frac{\beta_t}{\sigma_t^2}$ decays in $\mathcal O(1/t)$ because $\sigma_t^2=\int_0^t \beta_\tau d\tau = \beta t$. Incorporating with Corollary~\ref{cor: SB ODE}, SB ODE velocity $v_t=(Y_t-X_0)/t$ and the posterior mean $\mu_t=(1-t) X_0+tY_1$.
\end{proof}

\noindent \textbf{Rectify Flow} is derived from the perspective of the transport mapping problem, resulting in ordinary differential equation (ODE) formulations consistent with Flow Matching formulations. In order to further rectify the straightness of the flow, the Reflow~\cite{liu2024instaflow} possesses the ability to reduce transport costs, which parallels the proposed SB ODE.

\subsection{Relation between SB SDE with OT Problems}
\label{sect:sb-sde-with-ot-relation}
We first introduce entropic OT formulation, which is based on Kantorovich's OT problem~\cite{villani2009optimal}. We use $H(\pi)$ to express the entropy of a distribution $\pi$, and $D_{KL}(\pi_1, \pi_2)$ to denote the Kullback-Leibler divergence.
One widely-used entropic OT formulation with $H(\pi)$ based on Kantorovich's OT (with quadratic cost) is:
\begin{align}
	\inf_{\pi \in\Pi(p_X, p_Y)}\int_{\mathcal X\times \mathcal Y}\frac{\|x-y\|^2}{2} d\pi (x,y) - \epsilon H(\pi),
	\label{eq:entropic-ot}
\end{align}
where $\pi$ is a transport plan between $p_X$ and $p_Y$, $\mathcal X, \mathcal Y$ are $D$-dimensional Euclidean space, and $\Pi(p_X, p_Y) \subset \mathscr P_2(\mathcal X\times \mathcal Y)$ is the set of probability distribution on $\mathcal X\times \mathcal Y$ with marginals $p_X$ and $p_Y$. The minimizer is unique due to the convexity of $H(\pi)$.

We denote the path measures $\mathbb{Q}$ and $\mathbb{P}$ by the forward~\eqref{pan-sb-f2} and backward SDEs~\eqref{pan-sb-b2}, respectively. Within the SB framework \eqref{eq: sb-framework}, the SB problem gives rise to a stochastic process $\mathbb{P}$ with marginal distributions $p_X$ and $p_Y$ at $t = 0$ and $t = 1$, respectively, which has minimal KL divergence with the prior path measure $\mathbb Q$.

Now, by using Problem 4.2 in \cite{chenstochastic2021}, we denote OT plan $\pi^{\mathbb P}$ as the joint distribution of the stochastic process $\mathbb P$ whose marginal distrbutions are $\pi_0^{\mathbb P}$, $\pi_1^{\mathbb P}$. The coupling $(X, Y)$ at times $t=0,1$ determines the process $\mathbb Q$ and further denoted as the conditional form $\mathbb P_{|X,Y}$. The KL term in Eq.~\eqref{eq: sb-framework} can be decomposed as~\cite{vargas2021solving},
\begin{equation}
	\begin{aligned}
		D_{KL}(\mathbb Q\| \mathbb P)=&D_{KL}(\pi^{\mathbb Q}\| \pi^{\mathbb P})\\
		+&\int_{\mathcal X\times \mathcal Y} D_{KL}(\mathbb Q_{|X,Y}\| \mathbb P_{|X,Y}) d\pi^{\mathbb P}(X,Y),
		\label{eq: OT-KL}
	\end{aligned}
\end{equation}
the first term is the KL divergence at $t=0, 1$ (\textit{i.e.,} SDE start and end points), and the second term denotes the similarity of two processes during intermediate times. Moreover, the first term can be rewritten as,
\begin{align}
	D_{KL}(\pi^{\mathbb Q}\| \pi^{\mathbb P}) =  \int_{\mathcal X\times \mathcal Y} \frac{\|X - Y\|^2}{2\epsilon} d\pi^{\mathbb P}(X, Y) - H(\pi^{\mathbb P}) + const.
	\label{eq:entropic-int}
\end{align}
Proofs are in Appendix~\ref{app:entropy-int}. In Proposition 2.3 of \cite{leonard2013survey}, if $\mathbb P^*$ is the solution to Eq.~\eqref{eq: sb-framework}, then $\mathbb P^*_{|X,Y}=\mathbb Q_{|X,Y}$, hence, $\forall (X, Y)$, one can set the second term in Eq.~\eqref{eq: OT-KL} to zero and optimize Eq.~\eqref{eq: sb-framework} over process $\mathbb P$, which means:
\begin{equation}
	\begin{aligned}
		\inf_{\mathbb P\in \mathcal F(p_X, p_Y)} D_{KL}(\mathbb Q\| \mathbb P)&=\inf_{\mathbb P\in \mathcal F(p_X, p_Y)} D_{KL}(\pi^{\mathbb Q}\| \pi^{\mathbb P}) \\
		&= \inf_{\pi^{\mathbb P}\in \Pi(p_X, p_Y)} D_{KL} (\pi^{\mathbb Q}\| \pi^{\mathbb P}),
		\label{eq:sb-kl-to-ot-rel}
	\end{aligned}
\end{equation}
It implies the solutions of Eqs.~\eqref{eq:entropic-ot} and \eqref{eq:entropic-int} coincide and the SB problem can be simplified to the entrpoic OT problem.

\subsection{Why Stochasticity Can Help SB Matching}
Informally, SB SDE and SB ODE differ only by a stochastic Wiener term in their discrete formulations. This term introduces stochasticity along the trajectory in SB SDE.
From the main results, it is evident that SB SDE yields better outcomes compared to SB ODE. This is attributed to the characteristic of pansharpening tasks requiring the restoration of high-frequency information. The trajectory of SB ODE is deterministic and do not introduce any Gaussian noise. Therefore, to recover the high-frequency information of HRMS, the network needs to directly learn to generate the high frequencies, which is challenging. In contrast, with SB SDE, the introduction of Gaussian noise containing high frequencies during the sampling process makes it easier for the network to restore.

\subsection{Curvature of SB ODE compared with Diffusion VP ODE}
For a generative process $\mathbf Y = \{Y_t\}_0^1$ with initial value
$Y_0=X_0$, we informally provide curvature definition which the generative trajectory from a straight path:
\begin{equation}
	C(\mathbf Y)=\mathbb E{\Big\|}Y_1 - X_0 -\cfrac{\partial}{\partial t} Y_t{\Big\|}_2^2.
\end{equation}
The mean curvature $\mathbb E_{Y\sim p(\mathbf Y)} C(\mathbf Y)$ is first considered and related to the truncation error of the ODE solver. Therefore, the zero curvature represents straight sampling trajectory.
As a generative process is a time reversal of the forward process, the curvature of sampling process is determined by the forward process. In previous DPM works~\cite{song2019generative,ho2020denoising}, the forward process is prefixed, thus the optimal curvature of the sampling process can be derived. Consider the proposed SB ODE~\eqref{eq:ot-ode} for example, the average curvature can be derived,
\begin{subequations}
	\begin{align}
		C(\mathbf Y)_{\text{SB ODE}}&=\mathbb{E}_{\mathbf Y,t}\Big\|Y_1 - X_0 -\nabla_t\frac{\hat \sigma_t^2}{\hat \sigma_t^2 +\sigma_t^2}X_0 - \nabla_t\frac{\sigma_t^2}{\hat \sigma_t^2 +\sigma_t^2} Y_1 \Big\|_2^2 \label{eq:cur-sb-ode-def} \\
		&=\mathbb E_{\mathbf Y, t} \|(1-M_t) (Y_1-X_0)\|_2^2, \label{eq:cur-sb-ode} \\
		&\text{where } M_t=\frac{2\beta_t \sigma_t \hat\sigma_t(\sigma_t+\hat\sigma_t)}{(\sigma_t^2+\hat\sigma_t^2)^2}\notag,
	\end{align}	
\end{subequations}
and the average curvature of PF ODE (\textit{e.g.}, VP SDE) is,
\begin{subequations}
	\begin{align}
		C(\mathbf Y)_{\text{VP ODE}}&=\mathbb{E}_{\mathbf Y,t}\Big\|Y_1-X_0-\nabla_t(\alpha_t Y_0+\beta_t Z)\Big\|_2^2 \\
		=\mathbb{E}_{\mathbf Y,t}&\Big\|(1+N_t)X_0\Big\|_2^2+\mathbb{E}_{t}\|1+K_t\|_2^2\mathbf I_d, \label{eq:cur-vp-ode} \\
		\text{\hspace{-0.6em} where, } N_t=&\frac 1 8 [-2a(1-t)^2-4b(1-t)][a(1-t)+b]\alpha_t,\notag\\ 
		K_t&=-\alpha_t(1-\alpha_t^2)^{-\frac 12}N_t.\notag
	\end{align}
\end{subequations}
Following this definition, we perform average curvature numerical check on WV3 test set, the results are only 0.09 and 2.79 of SB ODE and VP ODE (note that $f_t$ is defined by more widely-used Variance Preserving (VP) parameterization~\cite{ho2020denoising,song2020score} as in Eq.~\eqref{eq:vp-sde}). The curvature of the proposed SB-ODE is much smaller than that of the previous DPM ODE, indicating that we can use fewer sampling steps to sample along the trajectory and suffer less sampling truncation errors. The derivations of the curvatures are provided in Appendix~\ref{app:curvature-of-sdes}.

\section{Conclusion}
This paper introduces an SDE/ODE method based on the Schr\"odinger Bridge (SB) to address the limitations of previous DPM-based approaches, such as being limited to transportation between Gaussian and data distributions, low sampling efficiency, and the complexity associated with training objectives and trajectory simulation in previous IPF and likelihood-based SB methods. Our approach achieves SOTA performance on three datasets for pansharpening, laying the groundwork for the application of SB methods in pansharpening tasks.

\section{Acknowledge}
This research is supported by National Key Research and National Natural Science Foundation of China (12271083), 
Development Program of China (Grant No. 2020YFA0714001), and Natural Science Foundation of Sichuan Province (2022NSFSC0501, 2023NSFSC1341, 2022NSFSC1821).

\bibliographystyle{AIMS}
\bibliography{ref_pvsc}

\clearpage
\appendix
\section{Derivations of Eqs.~\eqref{eq:cur-sb-ode} and \eqref{eq:cur-vp-ode}}
\label{app:curvature-of-sdes}

We first derive the curvature of SB ODE.
Using the definition:
\begin{align}
		C(\bf Y)_{\text{SB ODE}}&=\mathbb{E}_{\bf Y,t}\Big\|Y_1 - X_0 -\nabla_t\frac{\hat \sigma_t^2}{\hat \sigma_t^2 +\sigma_t^2}X_0 - \nabla_t\frac{\sigma_t^2}{\hat \sigma_t^2 +\sigma_t^2} Y_1 \Big\|_2^2, \label{eq: cur-sb-ode-definition}
\end{align}
we can deduce the first derivative:
\begin{subequations}
	\begin{align}
		\nabla_t\frac{\hat \sigma_t^2}{\hat \sigma_t^2 +\sigma_t^2}&=\frac{-2\hat \sigma_t^2\beta_t-\hat\sigma_t^2 \nabla_t(\sigma_t^2+\sigma_t^2)}{(\hat\sigma_t^2+\sigma_t^2)^2},\ \text{where } Y_1\sim p_{Y}, X_0\sim p_X \\
		&= -2\beta_t \frac{\hat\sigma_t^2(\hat\sigma_t^2+\sigma_t^2)+\hat\sigma_t^2(-\hat \sigma_t+\sigma_t)}{(\hat\sigma_t^2+\sigma_t^2)^2} \\
		&= -2\beta_t\frac{\hat\sigma_t\sigma_t(\sigma_t+\hat\sigma_t)}{(\hat\sigma_t^2+\sigma_t^2)^2}=:-M_t.
	\end{align}
\end{subequations}
Similarly, we can write the second derivate:
\begin{subequations}
	\begin{align}
		\nabla_t\frac{\sigma_t^2}{\hat \sigma_t^2 +\sigma_t^2}=2\beta_t\frac{\hat\sigma_t\sigma_t(\sigma_t+\hat\sigma_t)}{(\hat\sigma_t^2+\sigma_t^2)^2}=M_t.
	\end{align}
\end{subequations}
By substituting back in to \eqref{eq: cur-sb-ode-definition}, we can derive Eq.~\eqref{eq:cur-sb-ode}.

The curvature derivations of VP ODE is provided below.
Using the definition:
\begin{subequations}
	\begin{align}
		&C(\bf Y)_{\text{VP ODE}}\\
        &=\mathbb{E}_{\bf Y,t}\Big\|Y_1-X_0-\nabla_t(\alpha_t Y_0+\beta_t Z) \Big\|_2^2, \qquad\text{where } Y_1, Z\sim \mathcal N(0, \bf I_d)\\
		&=\mathbb{E}_{\bf Y,t}\Big\|Y_1-X_0-\nabla_t(\alpha_t) Y_0+\nabla_t(\beta_t) Z) \Big\|_2^2 \\
		&=\mathbb{E}_{\bf Y,t}\Big\|Y_1-X_0-\underbrace{\frac 18 [-2a(1-t)^2-4b(1-t)][a(1-t)+b]\alpha_t}_{\nabla_t(\alpha_t)=: N_t}Y_0  \\ &\qquad \underbrace{-\alpha_t(1-\alpha_t^2)^{-\frac 12}\nabla_t(\alpha_t)}_{\nabla_t(\beta_t):=K_t}Z\Big\|_2^2\\
		&=\mathbb{E}_{\bf Y,t}\Big\|(1+K_t)Y_1-(1+N_t)X_0\Big\|_2^2 \\
		&= \mathbb{E}_{\bf Y,t}\Big\|(1+N_t)X_0\Big\|_2^2+\mathbb{E}_{t}\|1+K_t\|_2^2\bf I_d
	\end{align}
\end{subequations}


\section{Proof of Eq.~\eqref{eq:entropic-int}}
\label{app:entropy-int}
\begin{proof}

Recall the prior process $\mathbb Q \in \mathcal F(p_X, p_Y)$ and let the $\mathbb P$ be a probility measure that define a process to recover the process $\mathbb Q$. As $f_t$ is set to 0 in our proposition,
hence, $\pi_0^{Y|X}$ is a Gaussian distribution $\mathcal N(Y|X, \epsilon \bf I_d)$, where $\epsilon$ is defined by $g_t$ in Eq.~\eqref{pan-sb-f2}. The KL term in Eq.~\eqref{eq:entropic-int} can be writed as,
\begin{subequations}
\begin{align}
	D_{KL}(\pi^{\mathbb Q}\| \pi^{\mathbb P})&=-\int_{\mathcal X\times \mathcal Y} \log \frac{d\pi^{\mathbb Q}(X, Y)}{d[X,Y]}d\pi^{\mathbb P}(X,Y)+\int_{\mathcal X\times \mathcal Y}\log\frac{d\pi^{\mathbb P(X,Y)}}{d[X,Y]}d\pi^{\mathbb P}(X,Y)\\
	&=-\int_{\mathcal X\times \mathcal Y} \log \frac{d\pi^{\mathbb Q}(X, Y)}{d[X,Y]}d\pi^{\mathbb P}(X,Y)-H(\pi^{\mathbb P}), \label{eq: d-kl-ot}
\end{align}
\end{subequations}
where $\frac{d\pi(X,Y)}{d[X,Y]}$ denotes the joint density of distribution $\pi$. The first term can be derive:
\begin{subequations}
	\begin{align}
		&\int_{\mathcal X\times \mathcal Y}\log \frac{d\pi^\mathbb Q(X,Y)}{d[X,Y]}d\pi^\mathbb P(X,Y)\\
		&=-\int_{\mathcal X\times \mathcal Y}\log \frac{d\pi^\mathbb Q(Y|X)}{dY}\frac{d\pi^\mathbb Q(X)}{dX}d\pi^\mathbb P(X,Y)\\
		&=-\int_{\mathcal X\times \mathcal Y}\log \frac{d\pi^\mathbb Q(Y|X)}{dY}d\pi^\mathbb P(X,Y)-\int_\mathcal X\int_\mathcal Y\log  \frac{d\pi^\mathbb Q(X)}{dX}d\pi^\mathbb P(Y|X) d\pi_0^\mathbb P (X)\\
		&=-\int_{\mathcal X\times \mathcal Y}\log \frac{d\pi^\mathbb Q(Y|X)}{dY}d\pi^\mathbb P(X,Y)-\int_\mathcal X\log \frac{d\pi^\mathbb Q(X)}{dX}\left[\int_\mathcal Y 1 d\pi^\mathbb P(Y|X)\right] d\pi_0^\mathbb P (X)\\
		&=-\int_{\mathcal X}\int_{\mathcal Y} \log \frac{d\pi^\mathbb Q(Y|X)}{dY}d\pi^\mathbb P(X,Y)-\int_\mathcal X\log \frac{d\pi^\mathbb Q(X)}{dX}d\pi_0^\mathbb P (X)\\
		&=-\int_{\mathcal X}\int_{\mathcal Y} \log \frac{d\pi^\mathbb Q(Y|X)}{dY}d\pi^\mathbb P(X,Y)-\int_{\mathcal X}\log\frac{d\mathbb \pi^\mathbb P_0(X)}{dX}d\mathbb \pi^\mathbb P_0(X)\\
		&=-\int_{\mathcal X\times \mathcal Y}\log \frac{d\pi^\mathbb Q(Y|X)}{dY}d\pi^\mathbb P(X,Y)+H(\pi_0^{\mathbb P})\\
		&=-\int_{\mathcal X\times \mathcal Y}\log \left((2\pi\epsilon)^{-D/2}\exp\left(\frac{\|X-Y\|^2}{2\epsilon}\right)\right)d\pi^{\mathbb P}(X,Y) + H(\pi_0^{\mathbb P})\\
		&=\frac D 2 \log (2\pi \epsilon) +\int_{\mathcal X\times \mathcal Y}\frac{\|X-Y\|^2}{2\epsilon}d\pi^{\mathbb P}(X,Y) + H(\pi_0^{\mathbb P})
	\end{align}
\end{subequations}
We can substitude back to Eq.~\eqref{eq: d-kl-ot} and further obtains,
\begin{equation}
	D_{KL}(\pi^{\mathbb Q}\| \pi^{\mathbb P})=\int_{\mathcal X\times \mathcal Y} \frac{\|X-Y\|^2}{2\epsilon}d\pi^{\mathbb P}(X,Y)-H(\pi^{\mathbb P})+\underbrace{\frac{D}{2} \log(2\pi \epsilon)+H(\pi^{\mathbb P}_0)}_{const\, \text{in Eq.~\eqref{eq:entropic-int}}}.
\end{equation}

\end{proof}

\section{SB ODE Reduce the Transport Cost}
Based on the relation of SB problem with OT problem, we conclude a corollary which claims the SB ODE can reduce the transport cost.

\begin{corollary}
	Given coupling $(X_0, Y_1)$, for any convex cost function $c: \mathbb R^d\to \mathbb R$, we have,
	\begin{equation}
		\mathbb E[c(Y_1-Y_0)]\leq \mathbb E[c(Y_1-X_0)].
	\end{equation}
\end{corollary}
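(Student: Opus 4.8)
The plan is to adapt the ``Reflow reduces convex transport cost'' argument of rectified flow to the SB ODE, with $Y_0$ understood as the endpoint obtained by integrating the reverse SB ODE from $Y_1$ (so that generically $Y_0 \neq X_0$, since the ODE uses the marginal, not the conditional, velocity field). First I would fix the data coupling $(X_0, Y_1)$ and recall from Corollary~\ref{cor: SB ODE} that the bridge interpolation is $Y_t = a_t X_0 + b_t Y_1$ with $a_t = \hat\sigma_t^2/(\sigma_t^2 + \hat\sigma_t^2)$ and $b_t = \sigma_t^2/(\sigma_t^2 + \hat\sigma_t^2)$, so that $\dot a_t = -\dot b_t$ and $\dot Y_t = \dot b_t\,(Y_1 - X_0)$. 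Since $\sigma_t^2$ is increasing and $\hat\sigma_t^2$ decreasing, $\tau := b_t$ is a strictly increasing bijection of $[0,1]$ onto itself, and under this time change the interpolation becomes the straight line $Y = (1-\tau)X_0 + \tau Y_1$ and the SB ODE becomes $dY/d\tau = \tilde v_\tau(Y)$ with marginal velocity $\tilde v_\tau(y) = \mathbb E[\,Y_1 - X_0 \mid Y_\tau = y\,]$; the endpoints $Y_0$ and $Y_1$ are untouched by the reparametrization.

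Second, I would record the marginal-preservation lemma underlying the SB ODE: the deterministic flow generated by $\tilde v_\tau$ pushes $\mathrm{Law}(Y_0)$ forward onto the bridge marginal $p_\tau = \mathrm{Law}(Y_\tau)$ for each $\tau$. This holds because $\tau \mapsto p_\tau$ and the pushforward of the ODE solve the same continuity equation $\partial_\tau p_\tau + \nabla\cdot(p_\tau \tilde v_\tau) = 0$ with the same initial datum, hence coincide by uniqueness — the same fact that makes the SB ODE of Corollary~\ref{cor:OT-ODE} a valid generative model. In particular $Y_0 \sim p_X$, so both sides of the claimed inequality are well-defined.

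Third comes the two-step Jensen estimate. Along an ODE trajectory, $Y_1 - Y_0 = \int_0^1 \tilde v_\tau(Y_\tau)\,d\tau$, so convexity of $c$ and Jensen's inequality against the uniform measure $d\tau$ on $[0,1]$ give $c(Y_1 - Y_0) \le \int_0^1 c\big(\tilde v_\tau(Y_\tau)\big)\,d\tau$. Taking expectations over the trajectory and using the marginal-preservation lemma to replace the ODE law at time $\tau$ by the bridge law,
\[
\mathbb E\big[c(Y_1 - Y_0)\big] \le \int_0^1 \mathbb E\big[c(\tilde v_\tau(Y_\tau))\big]\,d\tau .
\]
Then apply conditional Jensen to $\tilde v_\tau(Y_\tau) = \mathbb E[\,Y_1 - X_0 \mid Y_\tau\,]$, giving $c(\tilde v_\tau(Y_\tau)) \le \mathbb E[\,c(Y_1 - X_0) \mid Y_\tau\,]$ and hence $\mathbb E[c(\tilde v_\tau(Y_\tau))] \le \mathbb E[c(Y_1 - X_0)]$ for every $\tau$. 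Integrating in $\tau$ yields $\mathbb E[c(Y_1 - Y_0)] \le \mathbb E[c(Y_1 - X_0)]$, the assertion.

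The hard part will be the marginal-preservation lemma: one must ensure the deterministic SB ODE flow genuinely reproduces the stochastic bridge marginals, which needs enough regularity of $\tilde v_\tau$ for uniqueness of the continuity equation and for the conditional-expectation representation of $\tilde v_\tau$ to be legitimate. Near $\tau = 0,1$ the field can blow up (the $1/\sigma_t^2$ factor in Eq.~\eqref{eq:ot-ode}), so I would state the argument on $[\delta, 1-\delta]$ and pass to the limit, or invoke the flow-matching/rectified-flow framework where this is already established. Everything else is routine, and convexity of $c$ enters only through Jensen, so no structural hypothesis on $c$ beyond convexity is required.
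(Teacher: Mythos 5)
Your proposal is correct and follows essentially the same route as the paper's proof: write $Y_1-Y_0$ as the time integral of the generative drift, apply Jensen against the uniform measure on $[0,1]$, identify the drift as the conditional expectation $\mathbb E[Y_1-X_0\mid Y_t]$, apply conditional Jensen, and close with the tower property together with the agreement of the $p$ and $q$ marginals. Your version merely makes explicit what the paper asserts in one line (the marginal-preservation step via the continuity equation, plus the regularity caveat near the endpoints) and adds a cosmetic time reparametrization; the mathematical skeleton is identical.
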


\begin{proof}
The proof is based on Jensen's inequality.
\begin{subequations}
\begin{align}
	\mathbb E[c(Y_1-Y_0)]&=\mathbb E\left[c\left(\int_0^1 [Y_1-p(\hat Y_0|Y_{t+\Delta t}, t)]dt\right)\right] \label{eq:cor-c1-1}\\
	&\leq \mathbb E\left[\int_0^1 c\left(Y_1-p(\hat Y_0|Y_{t+\Delta t}, t)\right)\right] \label{eq:cor-c1-2}\\
	&=\mathbb E\left[\int_0^1 c\left(
		\mathbb E[(Y_1-\hat Y_0)|Y_t]
	\right)dt
	\right]\label{eq:cor-c1-3}\\
	&\leq \mathbb E \left[
		\int_0^1 \mathbb E[c(Y_1-X_0)|Y_t]dt
	\right]\label{eq:cor-c1-4}\\
	&= \int_0^1 \mathbb E\left[c(Y_1-X_0)|Y_t)dt\right]\label{eq:cor-c1-5}\\
	&= \mathbb E[c(Y_1-X_0)] \label{eq:cor-c1-6}.
\end{align}
\end{subequations}
Eq.~\eqref{eq:cor-c1-1} is based on the integral of the sampling trajectory. Eq.~\eqref{eq:cor-c1-2} is derived by using the convexity of the transport function $c$ and Jensen's inequality. Eq.~\eqref{eq:cor-c1-3} is held based on the definition of $p(\hat Y_0|Y_{t+\Delta t, t})$. The inequality of Eq.~\eqref{eq:cor-c1-4} comes from the same marginal with $p$ and $q$ processes. Then, Eq.~\eqref{eq:cor-c1-5} is held due to $\mathbb E[\mathbb E[Y_1-X_0|Y_t]] = \mathbb E[Y_1-X_0]$.
\end{proof}

\section{Constraint Sampling}
During the sampling process, we need to discretize the time steps for sampling. Despite the nearly straight trajectories of the proposed SB SDE/ODE, small sampling time steps can still introduce sampling errors. When sampling errors accumulate on manifolds not covered during training, it can lead to out-of-distribution (OOD) issues. Recently, several DPM and SB-related studies have discussed how to perform score matching or SB matching in constraint spaces~\cite{lou2023reflected,deng2024reflected,yang2024guidance,liu2024mirror}. They choose to train and sample under constraints such as hypercubes or simplexes, which incur additional costs during training (\textit{e.g.}, involving trajectory reflections or dual space projection). Here, we propose imposing gradient constraints during sampling, utilizing $Y_1$ and $c$ (optional) to solve gradients depending on the setting of the inverse problem:
\begin{align}
	\tilde X_0 = \hat X_0 - \psi \nabla_{\hat Y_0} \left(
		\|A_1(\hat X_0) - Y_1\|_2^2 + \|A_2(\hat X_0) - c\|_2^2
	\right)
	\label{eq:grad-constra}
\end{align}
In the context of pansharpening, $Y_1$ is the LRMS and $c$ is PAN. Thus, we can simulate the degradation process by implementing $A_1$ to be the blur kernel and $A_2$ to be the downsampling operator.
This approach pulls the sampled $\hat Y_0$ back to the correct manifold, reducing accumulated truncation errors during small sampling steps. The gradient sampling scheme is summarized in Algo.~\ref{algo:constraint-eval}.

\section{High-order Sampler for SB SDE and SB ODE}
Due to the larger sampling steps leading to increased truncation errors and accumulated errors, using a first-order Euler method can result in decreased sampling performance. Recent work has explored the use of higher-order SDE/ODE sampling methods to reduce truncation errors. For example, the RK method in DPM-solver, the Huen method in EDM, and the PC method in SGM. These methods can all be applied to the proposed SB SDE, maintaining sampling performance even with fewer sampling steps.
We provide pseudocode for the corresponding methods in SB SDE/ODE using the two-order Heun method in Algo.~\ref{algo:eval-heun}.

	\begin{algorithm}[H]
		\caption{Constraint sampling scheme}
		\label{algo:constraint-eval}
		\KwIn{Sampling timesteps $T$, trained $s_\theta$, degraded samples $p_0$, degraded samples $p_Y$, Conditional samples $p_c$ (optional), $\psi$ is the gradient constraint rate.}
		\KwOut{Sampled data $X_0$.}
		\SetKwFunction{linspace}{linspace} 
		$T_s = \linspace(1,0,T)$		\Comment*[r]{Init timestep sequence.}
		$Y_1\leftarrow p_Y, C\leftarrow p_c$;\\
		\For{$t\leftarrow T_s$}{
			$\hat X_0\leftarrow s_\theta(Y_t, C, t)$\Comment*[r]{Predicted endpoint.}
			\Comment{Gradient constraint \eqref{eq:grad-constra}.}
			$\tilde X_0 \leftarrow \hat X_0 - \psi \nabla_{\hat Y_0} \left(\|A_1(\hat X_0) - Y_1\|_2^2 + \|A_2(\hat X_0) - c\|_2^2\right)$;\\
			$Y_{t} \leftarrow p(Y_{t}|\tilde X_0, Y_{t+\Delta t})$\Comment*[r]{Posterior sampling~\eqref{eq: sample}.}
		}
		$X_0\leftarrow Y_0$.
	\end{algorithm}
		\begin{algorithm}[H]
			\caption{Sampling scheme using Heun method}
			\label{algo:eval-heun}
			\KwIn{Sampling timesteps $T$, trained $s_\theta$, degraded samples $p_0$, degraded samples $p_Y$, Conditional samples $p_c$ (optional).}
			\KwOut{Sampled data $X_0$.}
			\SetKwFunction{linspace}{linspace} 
			$T_s = \linspace(1,0,T)$ \Comment*[r]{Init timestep sequence.}
			$Y_1\leftarrow p_Y, C\leftarrow p_c$; $\Delta t\leftarrow 1/T$;\\
			\For{$t\leftarrow T_s$}{
				$\hat X_0^{(1)}\leftarrow s_\theta(Y_t, C, t)$\Comment*[r]{Predicted endpoint.}
				\eIf{$t\neq 0$}{
					
					$t^\prime \leftarrow t+\Delta t$\Comment*[r]{Select a temporary timestep.}
					$Y_{t^\prime} \leftarrow q(Y_{t^\prime} |\hat X_0^{(1)}, t^\prime)$\Comment*[r]{Move towards $Y_1$.}
					$\hat X_0^{(2)}\leftarrow s_\theta(Y_{t^\prime}, C, t^\prime)$\Comment*[r]{Predicted endpoint $Y_{t^\prime}$.}
					$Y_{t} \leftarrow p(Y_{t}|\frac{\hat X_0^{(1)}+\hat X_0^{(2)}}{2}, Y_{t+\Delta t})$;
				}{
					$Y_{t} \leftarrow p(Y_{t}|\hat X_0^{(1)}, Y_{t+\Delta t})$;
				}
			}
			$X_0\leftarrow Y_0$.
		\end{algorithm}

\section{More Visual Results}
In this section, we provide more visual results of WV3 and GF2 test set as shown in Figs.~\ref{fig:wv3-more-results} and \ref{fig:gf2-more-results}.

\begin{figure}[htbp]
	\begin{subfigure}{0.5\linewidth}
		\centering
		\includegraphics[width=\linewidth]{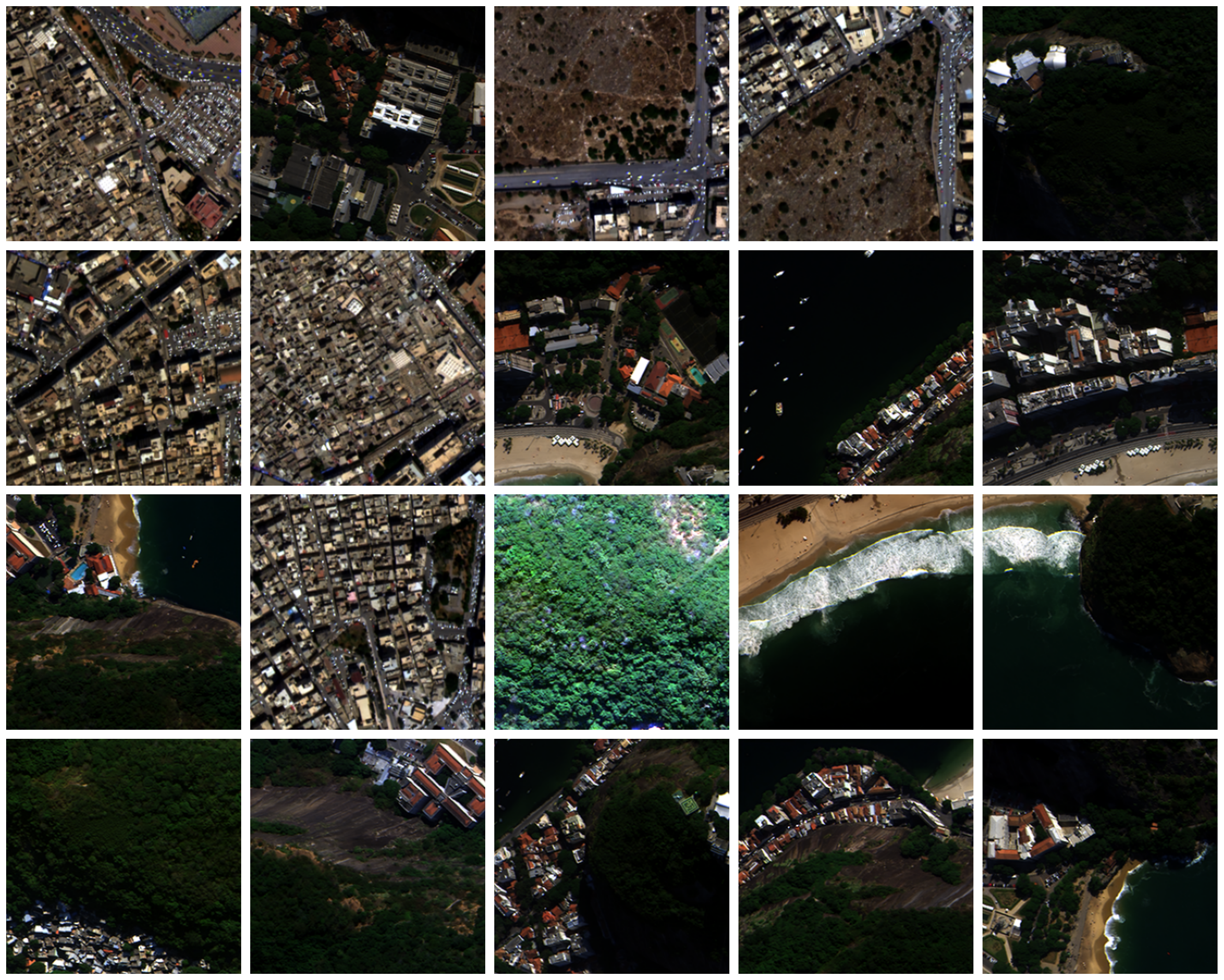}
		\caption{WV3 reduced test set.}
		\label{fig:wv3-more-results}
	\end{subfigure}%
	\begin{subfigure}{0.5\linewidth}
	\centering
	\includegraphics[width=\linewidth]{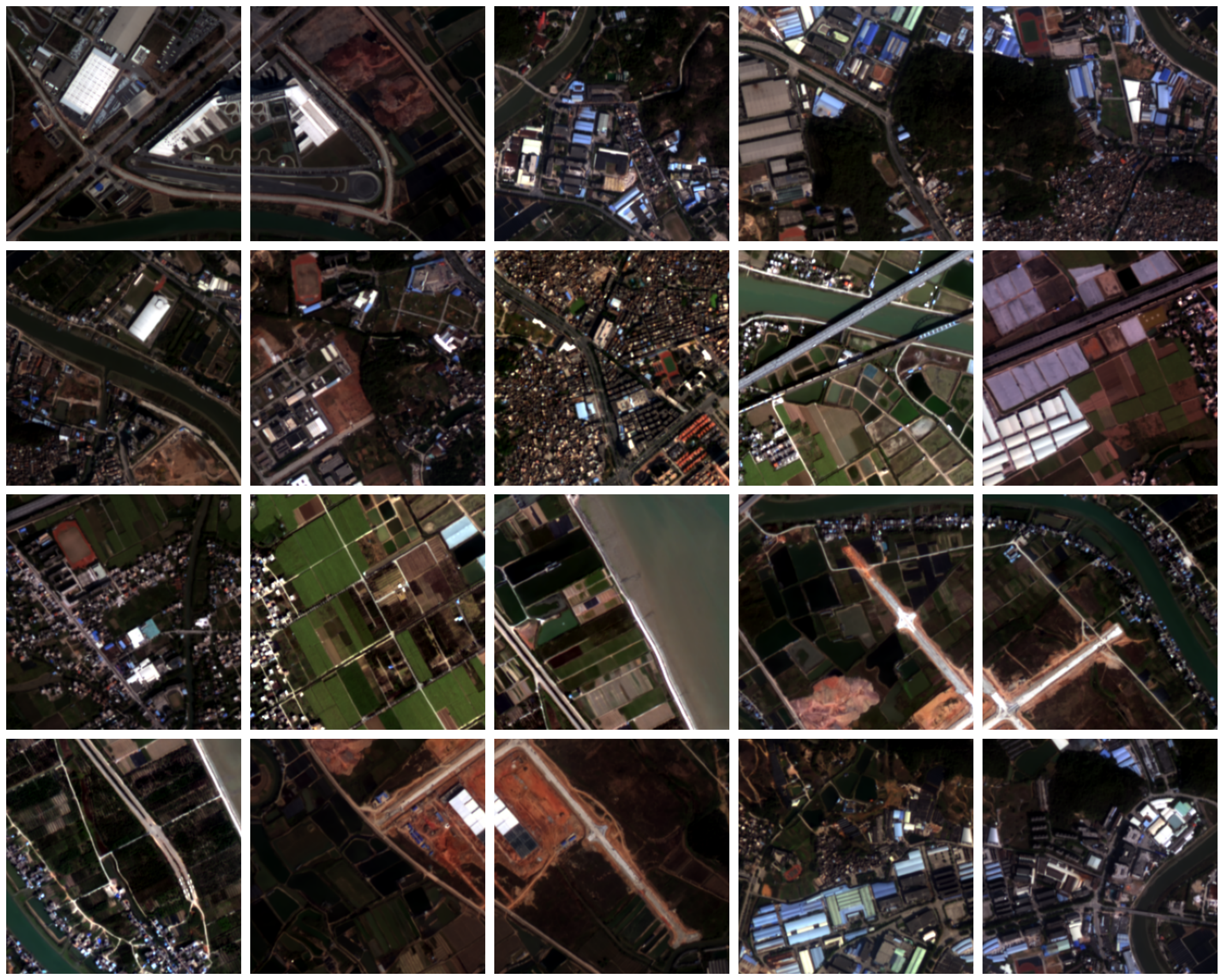}
	\caption{GF2 reduced test set.}
	\label{fig:gf2-more-results}
	\end{subfigure}
	\centering
	\begin{subfigure}{0.5\linewidth}
		\centering
		\includegraphics[width=\linewidth]{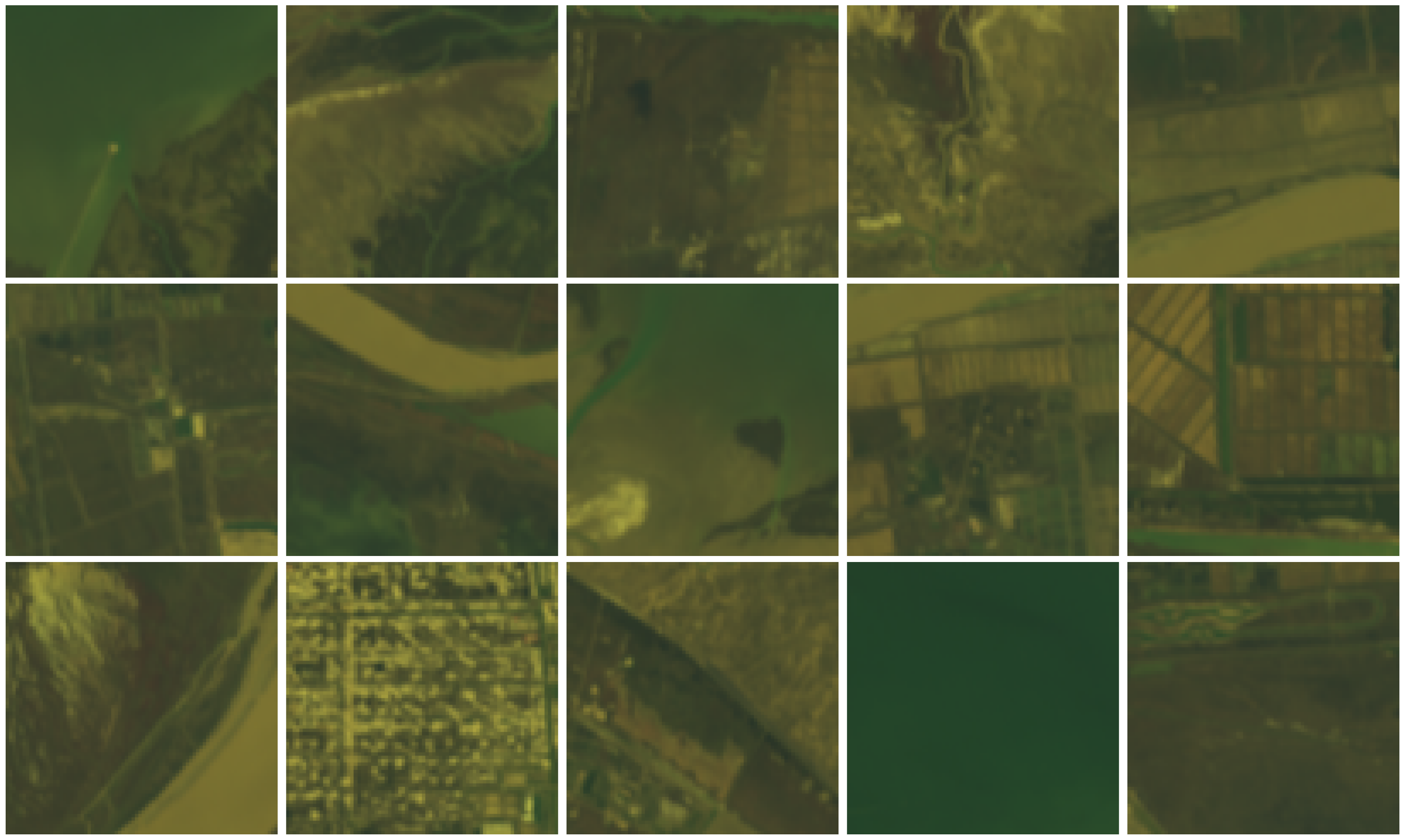}
		\caption{GF5-GF1 reduced test set.}
		\label{fig:gf5-gf1-more-results}
	\end{subfigure}
	\caption{Pansharpening results on WV3, GF2, and GF5-GF1 reduced test sets.}
\end{figure}

\section{Different Parameterizations of $f_t$ and $g_t$}
In Theroem~\ref{theorem:sb-sde}, we set $f_t=0$ and $g_t=\frac{d\sigma_t^2}{dt}=\sqrt{\beta_t}$ as a special parameterization for the SB SDE. In this section, we provide more parameterization for the bridge matching.
Inspired by VP SDE~\cite{ho2020denoising} and sub-VP-SDE~\cite{song2020score}, we can derive (sub-)VP-like SB SDEs. We leave them as the feature work.

\begin{table}[H]
	\centering
	\caption{Different parameterization of SB SDE of $f_t$ and $g_t$.}
	\resizebox{0.8\linewidth}{!}{
		\begin{tabular}{c|ccc}
			\toprule
			Param. & $f_t$ & $g_t^2$ & $q(Y_t|X_0,Y_1)$ \\
			\hline
			VP & $\frac{\log \alpha_t}{dt}Y_t$ & $\frac{d\sigma_t^2}{dt}-2\frac{d\log \alpha_t}{dt}\sigma_t^2$ & $\mathcal N(\alpha_tX_0 +\sigma_tY_1, \sigma_t^2 \bf I_d)$ \\
			sub-VP & $\frac{\log \alpha_t}{dt}Y_t$ & $-2\frac{d\log \alpha_t}{dt}(1+\alpha_t^2)\sigma_t$ & $N(\alpha_tX_0 +\sigma_tY_1, \sigma_t^2 \bf I_d)$ \\
			\hline
			Default & 0 & $\sqrt{\beta_t}$ & $\mathcal N(\frac{\hat\sigma_t^2}{\hat \sigma_t^2+\sigma_t^2} X_0+\frac{\sigma_t^2}{\hat \sigma_t^2+\sigma_t^2} Y_1,\frac{\sigma_t^2\hat\sigma_t^2}{\hat\sigma_t^2+\sigma_t^2}\mathbf I_d)$ \\
			\bottomrule
	\end{tabular}}
\end{table}

\medskip
Received xxxx 20xx; revised xxxx 20xx; early access xxxx 20xx.
\medskip

\end{document}